\newtheorem{theorem}{Theorem}
\newtheorem{lemma}{Lemma}
\newtheorem{definition}{Definition}
\newtheorem{condition}{Condition}
\newcommand{\tref}[1]{Tab.~\ref{#1}}
\newcommand{\eref}[1]{Eq.~(\ref{#1})}
\newcommand{\Eref}[1]{Equation~(\ref{#1})}
\newcommand{\fref}[1]{Fig.~\ref{#1}}
\newcommand{\Fref}[1]{Figure~\ref{#1}}
\newcommand{\sref}[1]{Sec.~\ref{#1}}
\newcommand{\Sref}[1]{Section~\ref{#1}}
\newcommand{\Ltwo}{\ell_2}
\newcommand{\Var}{\mathrm{Var}}
\newcommand{\var}{\mathrm{var}}
\newcommand{\cov}{\mathrm{cov}}
\newcommand{\E}{\mathrm{E}}
\newcommand{\bz}{\mathbf{Z}}
\newcommand{\bx}{\mathbf{X}}
\newcommand{\by}{\mathbf{Y}}
\newcommand{\bbx}{\mathbf{x}}
\newcommand{\Set}{\mathcal{S}}
\def\rnum#1{\expandafter{\romannumeral #1}}
\newcommand{\tblcaption}[1]{\def\@captype{table}\caption{#1}}
\newcolumntype{Y}{>{\centering\arraybackslash}p{1.9zw}}
\ifcvprfinal\pagestyle{empty}\fi
\begin{document}
\title{Dimensionality's Blessing: Clustering  Images by Underlying Distribution}

\author{
Wen-Yan Lin\\
Advanced Digital Sciences Center\\
{\tt\small linwenyan.daniel@gmail.com}
\\
Jian-Huang Lai\\
Sun Yat-Sen University \\
{\tt\small stsljh@mail.sysu.edu.cn}
\and
Siying Liu\\
Institute of Infocomm Research \\
{\tt\small szewinglau@gmail.com}
\\
Yasuyuki Matsushita\\
Osaka University \\
{\tt\small yasumat@ist.osaka-u.ac.jp	}
}
\maketitle

\begin{abstract}

Many  high dimensional  vector distances tend to a constant. This is typically considered a negative ``contrast-loss''
phenomenon that hinders  clustering and other machine learning techniques. We reinterpret
``contrast-loss'' as a blessing. Re-deriving ``contrast-loss'' using the law of large numbers,
we show    it results in  a distribution's instances concentrating on  a thin ``hyper-shell''. The hollow center means  apparently
chaotically overlapping distributions are actually  intrinsically separable.  We use
 this to develop distribution-clustering, an elegant algorithm for 
 grouping of  
 data points by their (unknown) underlying distribution. 
Distribution-clustering, creates notably clean clusters from  raw unlabeled data, 
 estimates the number of clusters for itself and is inherently robust to ``outliers'' which 
form their own clusters. This enables    trawling for patterns in    unorganized data 
and may be the key to enabling machine intelligence.

\end{abstract}

\section{Introduction}

Who is thy neighbor? The question is  universal  and old as the Bible. In computer vision, images are typically converted into a
high-dimensional vector known as image descriptors. \emph{Neigborness} of images is defined as distances between their respective  descriptors.
This approach has had mixed success. Descriptors excel at nearest-neighbors retrieval applications. However, descriptor distances
are rarely effective in other neighbor based machine learning tasks like  clustering.

\begin{figure}[htp]
\center
\begin{tabular}{cc}
\includegraphics[width=0.45\linewidth]{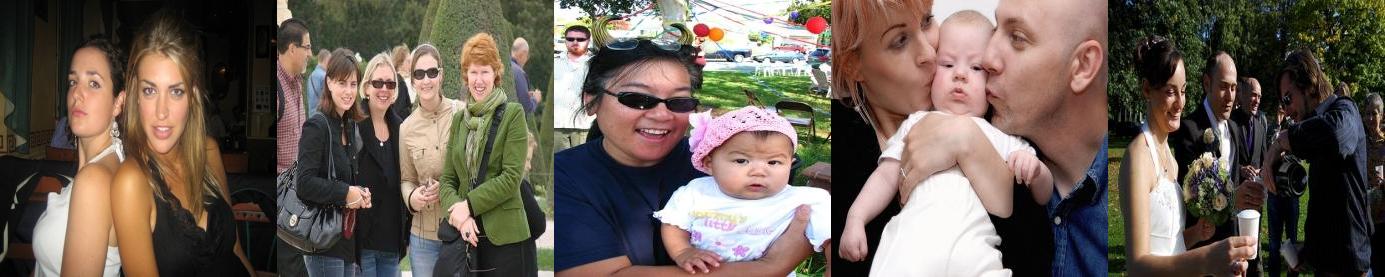}&
\includegraphics[width=0.45\linewidth]{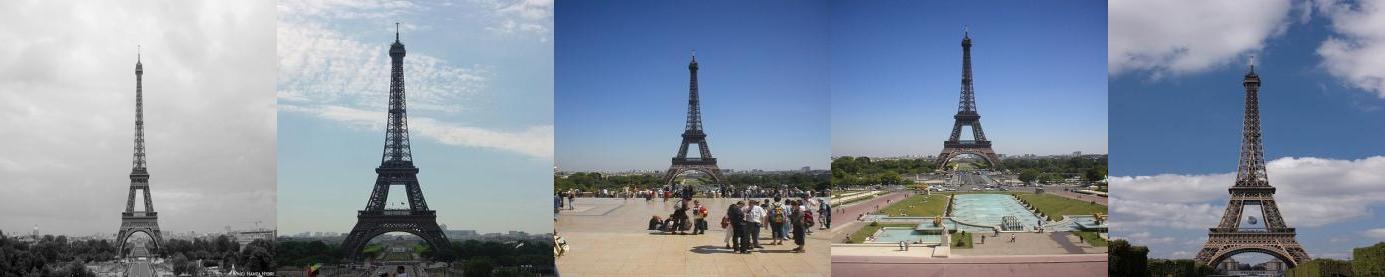}\\
\includegraphics[width=0.45\linewidth]{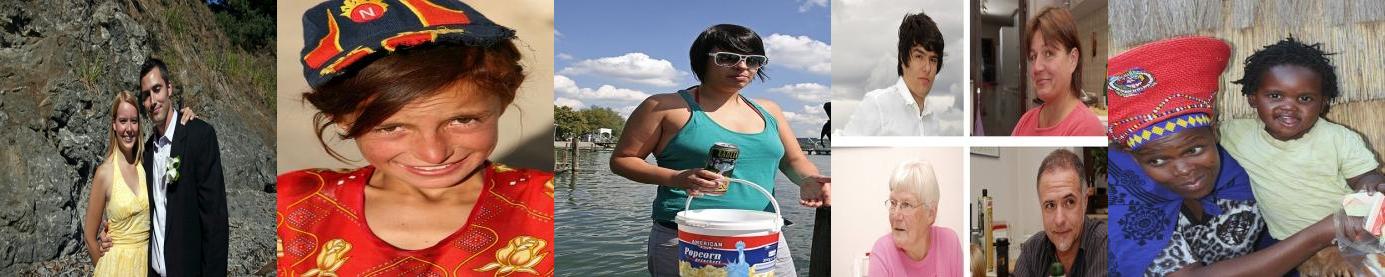}&
\includegraphics[width=0.45\linewidth]{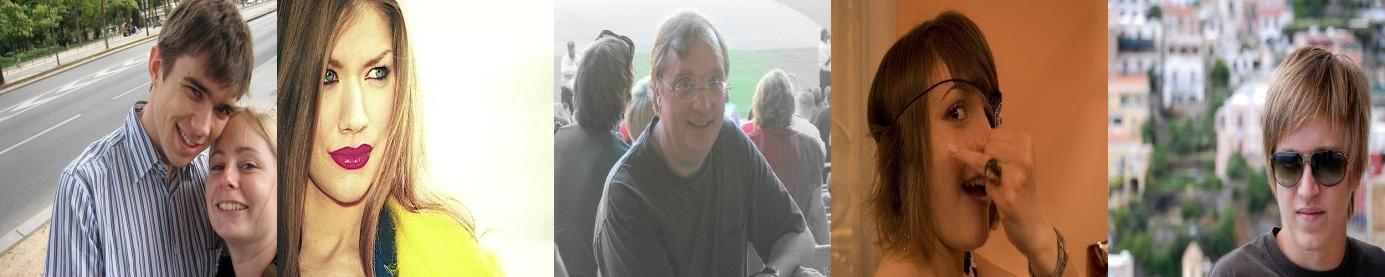}\\
\includegraphics[width=0.45\linewidth]{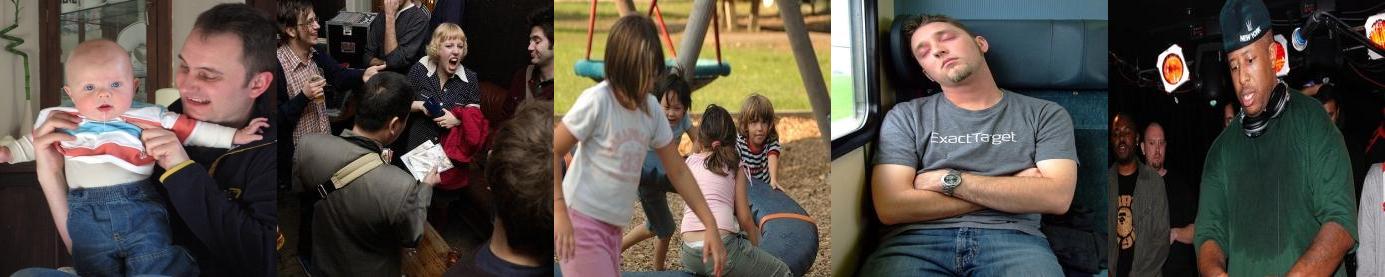}&
\includegraphics[width=0.45\linewidth]{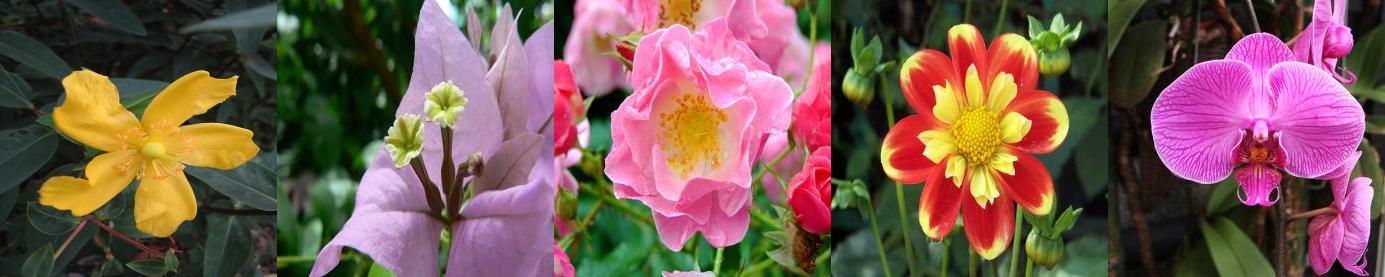}\\
\includegraphics[width=0.45\linewidth]{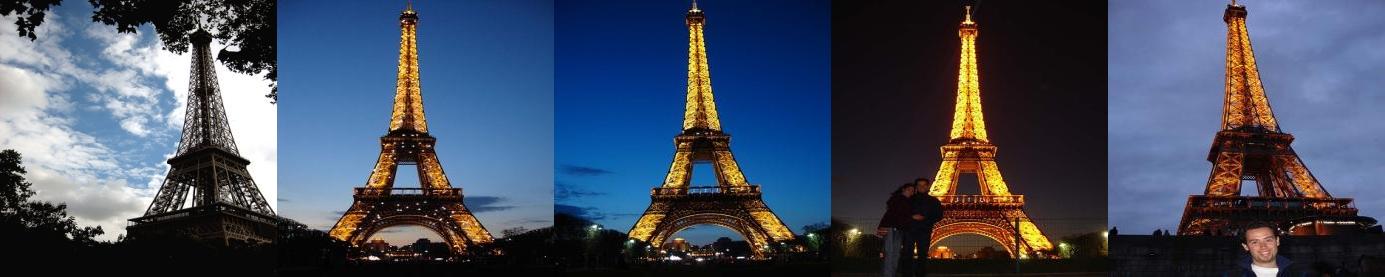}&
\includegraphics[width=0.45\linewidth]{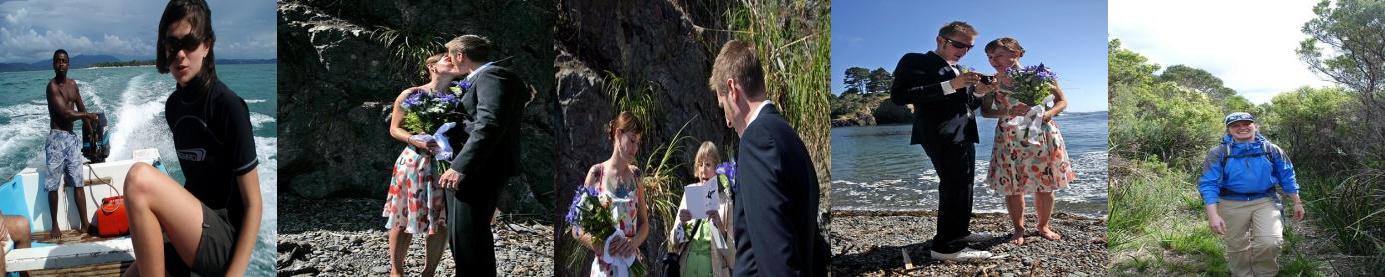}\\
\includegraphics[width=0.45\linewidth]{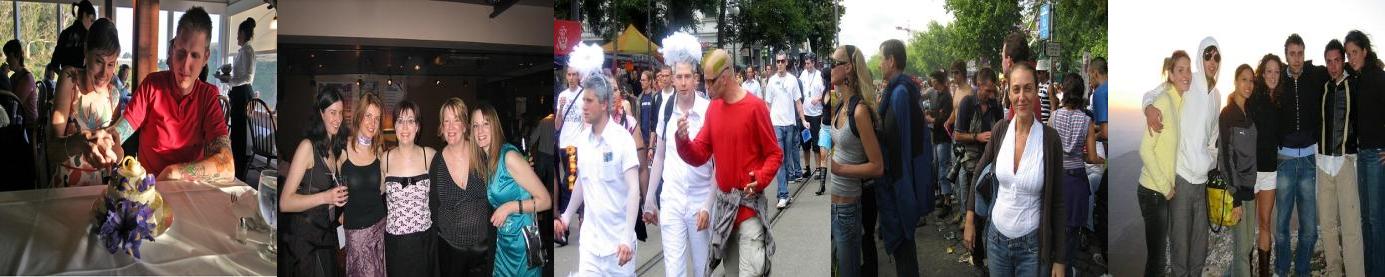}&
\includegraphics[width=0.45\linewidth]{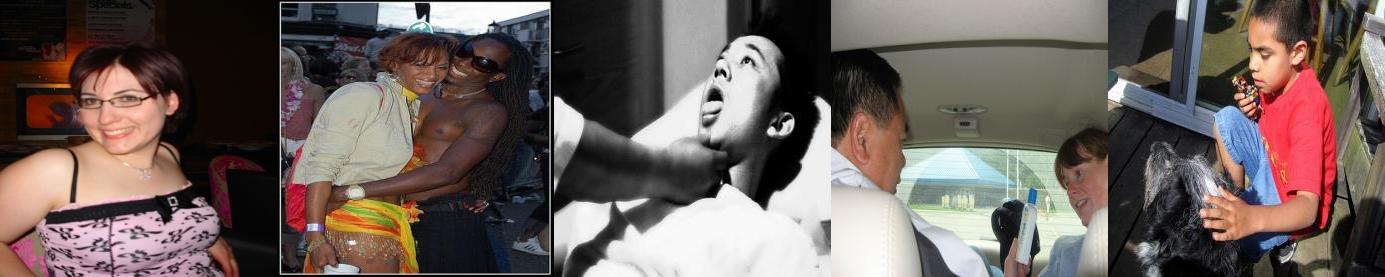}\\
\includegraphics[width=0.45\linewidth]{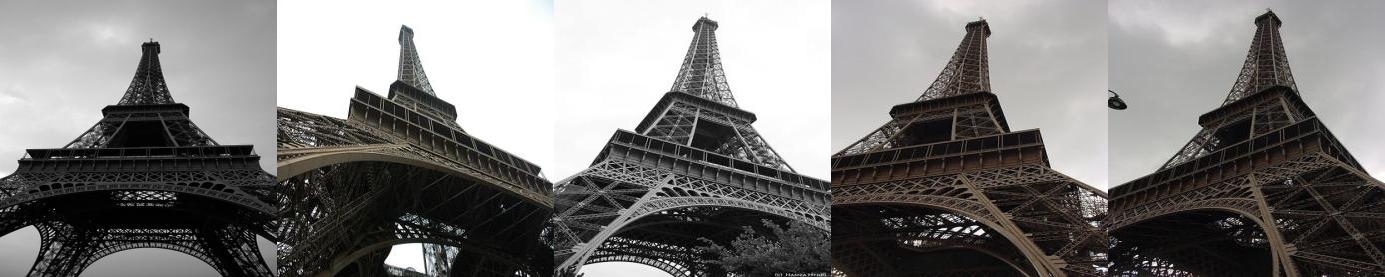}&
\includegraphics[width=0.45\linewidth]{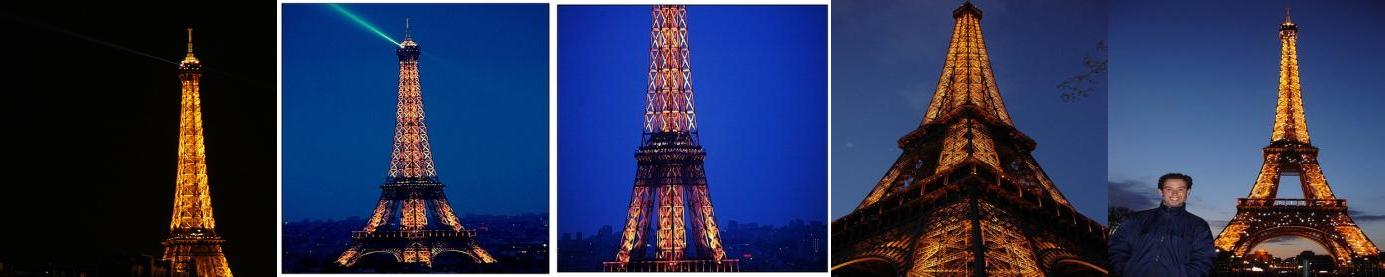}\\
\includegraphics[width=0.45\linewidth]{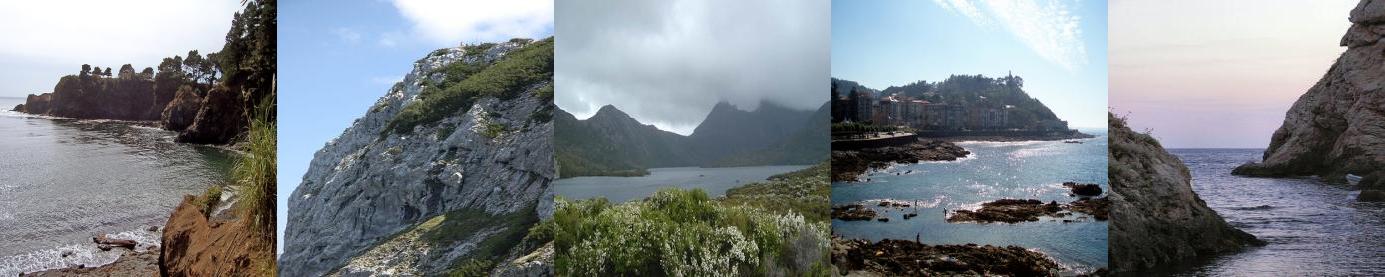}&
\includegraphics[width=0.45\linewidth]{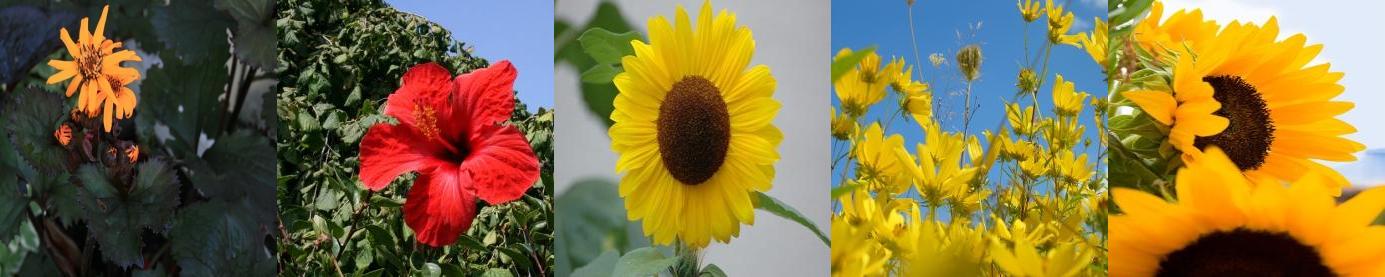}\\
\multicolumn{2}{c}{Distribution-clusters from a subset of Flickr11k~\cite{yang2008contextseer}}\\
\includegraphics[width=0.45\linewidth]{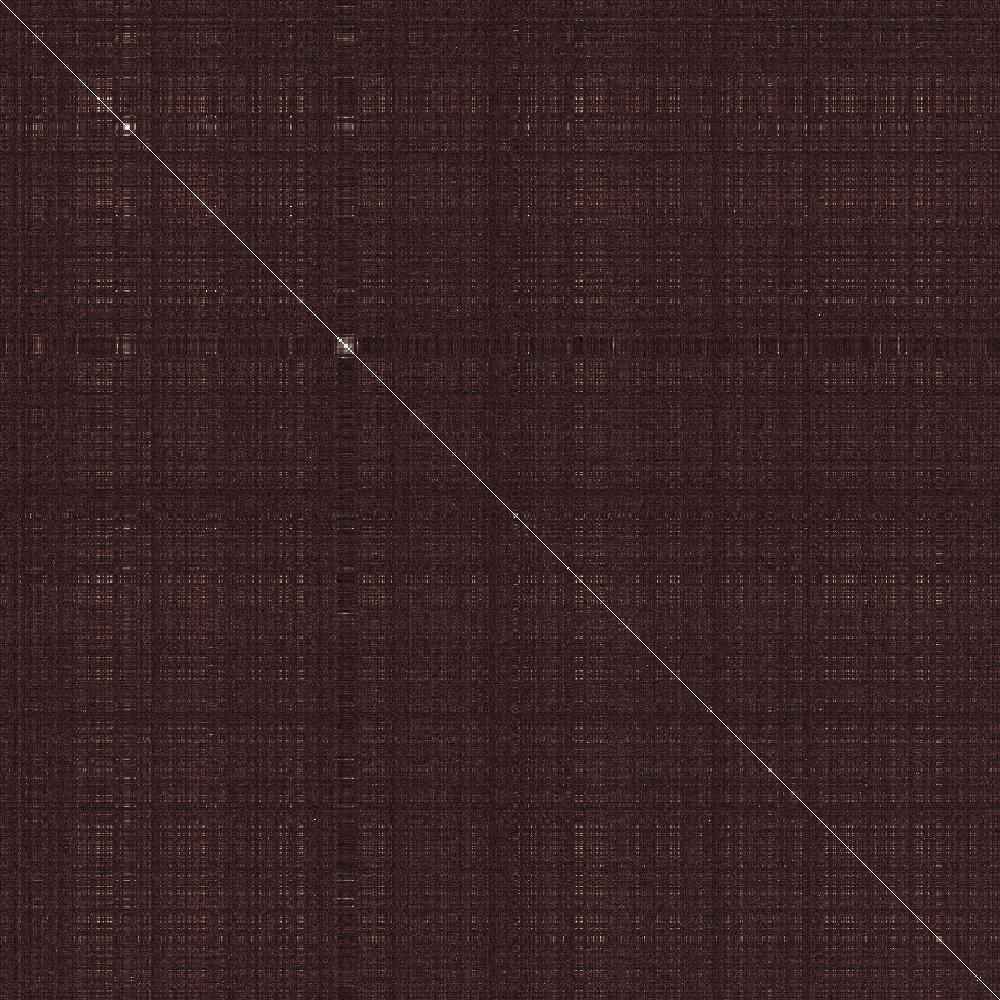}&
\includegraphics[width=0.45\linewidth]{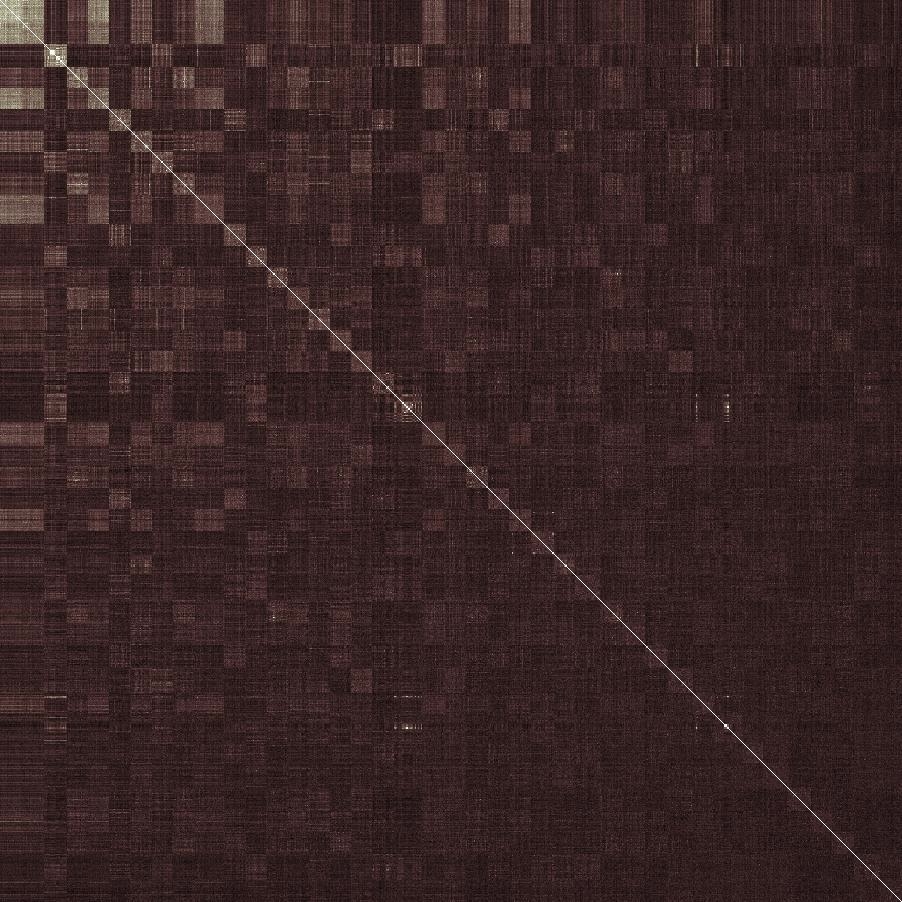}\\
\multicolumn{2}{c}{Affinity matrix before and after clustering}
\end{tabular}
\caption{Distribution-clustering on a set of random images. Our technique  captures even  highly variable distributions like flowers and scenery.
The    post-clustering  affinity matrix displays  distinctive blocky patterns predicted by our theory. 
   \label{fig:qual}}
\end{figure}

Conventional wisdom suggests  poor clustering performance is due to two intrinsic factors.
 a)
 Images are the product of a complex interplay of geometric, illumination and occlusion factors.
These are  seldom constant, causing   even images of the same location  to  vary significantly from each other.
 The extreme variability makes clustering difficult. This can be understood mathematically as each image being an instance 
of some distribution. The variability causes image data-sets to be  chaotic. Here, chaotic
is defined as having distributions whose mean separation is significantly smaller than their standard deviation.  
Clustering  chaotic data is ill-posed because data points
of different distributions mingle. This can be alleviated by enhancing invariance with
 higher dimensional image descriptors. However, it   leads to a second problem.
b) As dimensions increase,   ``contrast-loss''~\cite{aggarwal2001surprising, beyer1999nearest,domingos2012few}  occurs.
Distances between points tend to a constant, with  traditional clustering metrics becoming  ill-defined~\cite{aggarwal2001surprising,aggarwal2001outlier}.
This is considered part of the curse of dimensionality~\cite{domingos2012few, wiki}.

We offer a  different perspective in which  ``contrast-loss'' is not a problem but the  solution to clustering chaotic data. The core idea is simple.
What was previously interpreted as ``contrast-loss'' is actually  the law of large numbers causing  instances of a distribution  to concentrate on a thin ``hyper-shell''.   
The hollow shells mean  
 data points from apparently overlapping distributions  do not  actually mingle, making  choatic data  intrinsically separable.  
We encapsulate this constraint into  a second order cost that treats the rows of an affinity matrix as identifiers for  instances of  the same   distribution. We term this distribution-clustering.  

Distribution-clustering is  fundamentally different   from traditional clustering
as it can disambiguate chaotic data,  self-determine the number of clusters   and is intrinsically  robust to ``outliers'' that form their own clusters. This mind-bending result    provides an elegant solution to a problem previously deemed intractable. 
Thus, we feel it fair to conclude that 
``contrast-loss'' is a blessing rather than a curse.

%
%
%

\subsection{Related Works}
To date, there are a wide variety of   clustering algorithms~\cite{ng2002spectral,kanungo2002efficient,ester1996density,dempster1977maximum,goemans1995improved,shi2000normalized} customized to various tasks.  A  comprehensive survey is  provided in~\cite{berkhin2006survey,xu2015comprehensive}.  Despite the  variety, we believe distribution-clustering  is the first  to    utilize the peculiarities of high dimensional space. 
 The result in a   fundamentally different  clustering algorithm.

This  work is also part of  on-going research in the properties of high dimensional space. 
Pioneering research  began with  Beyer~\etal's~\cite{beyer1999nearest} discovery of  ``contrast-loss''. This   was  interpreted as 
an intrinsic  hindrance to  clustering and machine learning~\cite{aggarwal2001surprising, beyer1999nearest,domingos2012few}, motivating the development of sub-space clustering~\cite{parsons2004subspace,elhamifar2013sparse},  projective-clustering~\cite{agarwal2004k,kriegel2009clustering}, and other techniques ~\cite{Yu11,kriegel2008angle,francois2007concentration} for 
  alleviating ``contrast-loss''. 
This simplistic view has begun to change, with  recent papers observing that  ``contrast-loss'' can be 
 beneficial in detecting outliers~\cite{radovanovic2015reverse,zimek2012survey}, cluster centroids~\cite{tomasev2014role} and scoring clusters~\cite{tomasev2014role}. These results indicate a gap in our knowledge but a high-level synthesis is still lacking.

While we do not agree with Aggarwal~\etal's~\cite{aggarwal2001surprising} interpretation of  ``contrast-loss'', we are inspired by their attempts to develop a general intuition about the behavior of algorithms in high-dimensional space. This motivates us to analyze the problem  from both
intuitive and mathematical perspectives. We  hope it  contributes to the general understanding of  high dimensional space.

Within the larger context of artificial intelligence research, our work can be 
considered research on   similarity functions  surveyed by Cha~\cite{cha2007comprehensive}. Unlike
most other similarity functions, ours is statistical in nature, relying on  extreme improbability
of events to achieve separability. Such statistical similarity has been used both explicitly~\cite{bian2017gms} and 
implicitly~\cite{lin2016repmatch,delvinioti2014image, qin2011hello,zhang2012query,shi2015early} in matching and retrieval. 
Many of these problems 
may be reformulatable in terms of 
the law of large numbers, ``contrast-loss'' and  high-dimensional features. This is a fascinating and as yet unaddressed question.

Finally, distribution-clustering builds on  decades of research on image descriptors~\cite{oliva2001modeling,jegou2010,arandjelovic2016netvlad} and normalization~\cite{jegou2012negative,arandjelovic2012three}.
These works reduce variation, making the law of large numbers more impactful at lower dimensions. As distribution-clustering is based
on the law of large numbers, its performance is correspondingly enhanced.     

\section{Visualizing High Dimensions}

Our intuition about space was formed in two and three dimensions and is often misleading
in high dimensions. In fact, it can be argued that the ``contrast-loss'' curse ultimately derives from  misleading visualization.
This section aims to correct that.

At low dimensions, our intuition is  that solids with similar parameters  have significant volumetric overlap.
This is not true in high dimensions.

Consider  two high dimensional hyper-spheres which  are identical except for a small difference in radius. Their volume ratio is
\begin{equation}\label{eq:vol}  \left(\frac{r-\Delta r}{r}\right)^k=\left(1-\frac{\Delta r}{r}\right)^k \to 0, \quad k \to \infty \end{equation}
which  tends to zero as  the number of dimensions, $k \to \infty$.
This implies almost all  of a sphere's volume is concentrated at its surface.
Thus, small changes in either radius or centroid cause apparently overlapping spheres to
have near zero  intersecting volume as illustrated in \fref{fig:stat}, i.e., they become volumetrically separable! A more rigorous proof can be found in~\cite{hopcroft2014foundations}.

\begin{figure}[htp]
\centering
\includegraphics[width=1\linewidth]{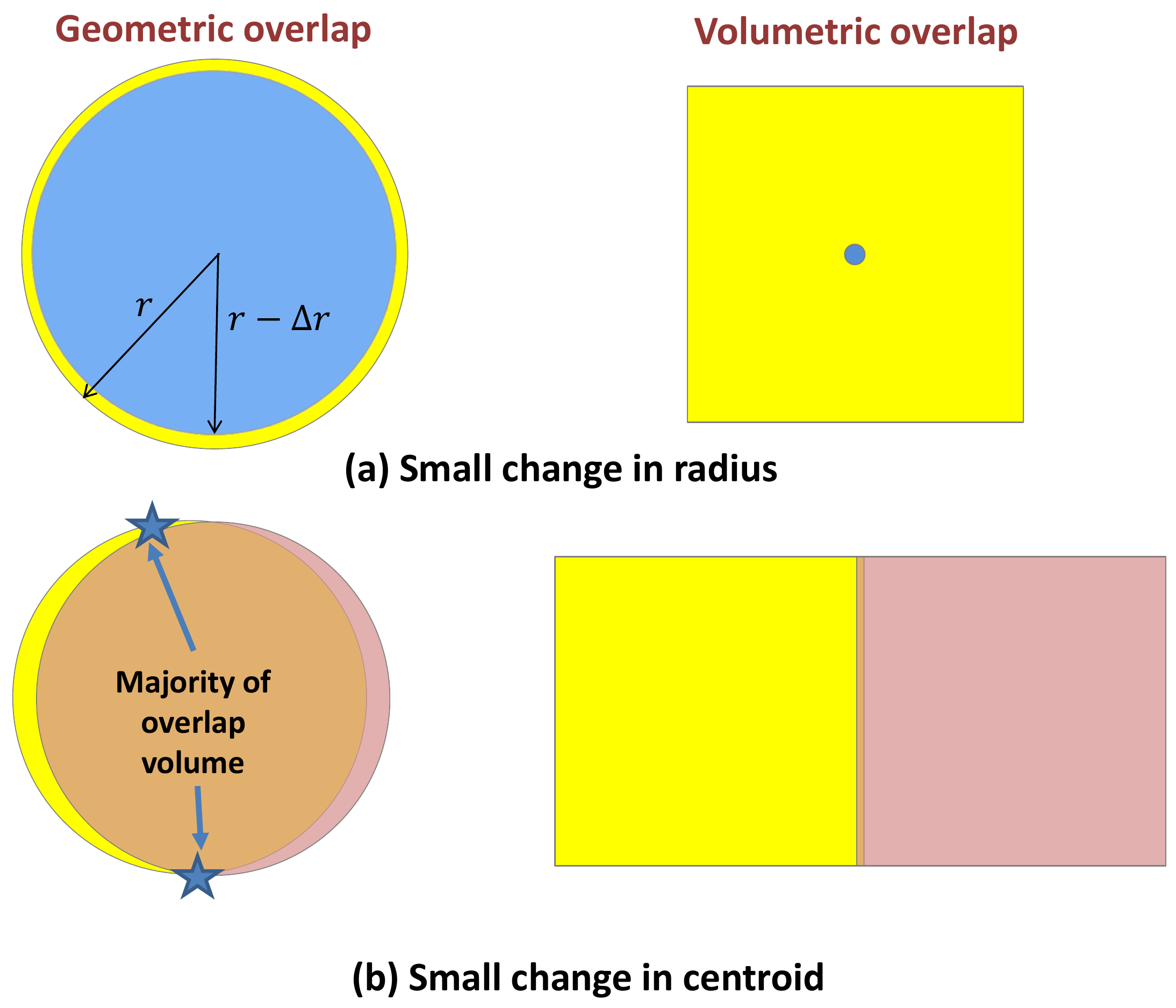}\\
\caption{Almost all of a high dimensional sphere's volume is near its surface. Thus,
small changes of radius or centroid result in almost no volumetric overlap. The apparently overlapping spheres are volumetrically separable!  \label{fig:stat}}
\end{figure}

Intriguingly, instances of a distribution behave similarly to  a hyper-sphere's volume.
\Sref{sec:dc} shows that when distributions have  many independent dimensions, their instances  concentrate on thin ``hyper-shells''.
Thus,  instances of apparently 
   overlapping distributions almost never mingle. This  makes    clustering   chaotic data by distribution  a well-posed problem, as    illustrated  in \fref{fig:stat3}.

%
%

\begin{figure}[htp]
\centering
\includegraphics[width=0.95\linewidth]{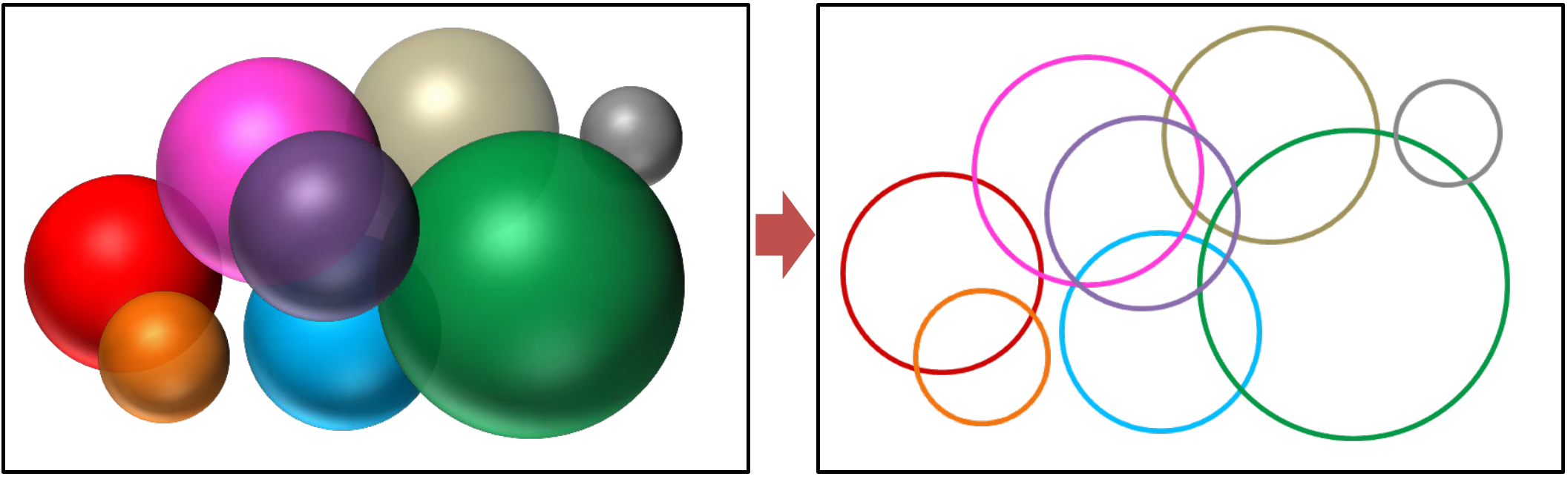}\\
\caption{\textbf{Left:} Traditional view of chaotic data as overlapping spheres.  Clustering 
  such data is deemed an ill-posed problem. 
\textbf{Right:} Our visualization. Each distribution's instances   form hollow rings. Thus, clustering  data by fitting ``hyper-shells''  becomes a well-posed
problem.    \label{fig:stat3}}
\end{figure}

\section{Distribution-Clustering (theory)}
Images are often  represented as high dimensional feature vectors, such as the $4096$-dimensional
NetVLAD~\cite{arandjelovic2016netvlad} descriptor. This section  shows how we can create indicators to group images based on their generative  distributions.

\begin{definition}
\mbox{ }
\begin{itemize}
\item $D(m, \sigma^2)$ denotes a  probability distribution with mean $m$ and variance $\sigma^2$ ;
\item $\Set_n=\{\begin{array}{cccc} 1,& 2, & \hdots, n \end{array}\}$ denotes a set of consecutive positive integers from $1$ to $n$;
\item $d(.)$ denotes a normalized squared $\Ltwo$ norm operator, \ie, for $\mathbf{x}\in \mathbb{R}^k$, $d(\mathbf{x})=\frac{\|\mathbf{x}\|^2}{k}$.
\end{itemize}

Let $\bz = \left [\begin{array}{cccc} Z_1, & Z_2, &\hdots, &Z_k\end{array} \right]^T$ denote a $k$ dimensional random vector where $Z_i$ is a random variable,
\begin{itemize}
\item  $d(.)$ operator  can also be applied on a random vectors $\mathbf{Z}$. $d(\mathbf{Z})=\frac{\|\mathbf{Z}\|^2}{k}$ is a random variable formed by averaging $\bz$'s squared elements. 
\item $E(\mathbf{Z})=\left [\begin{array}{cccc} \E(Z_1),& \E(Z_2), & \hdots, & \E(Z_k) \end{array} \right]^T$ is a vector of each dimension's expectation;
\item With a slight abuse of notation, we define $\Var(\mathbf{Z})=\sum_{i=1}^k \frac{\var(Z_i)}{k}$, as the average variance over all dimensions.
\end{itemize}
\end{definition}

\subsection{Passive Sensing Model}
\label{sec:images_ind}

Many data sources (like cameras) can be modeled as passive sensors.
Data points  (like image descriptors) $\{\bbx^{(i)}: i\in \Set_n\}$, are instances  of   random  vectors $\{\mathbf{X}^{(i)}: i\in \Set_n\}$ representing 
 environmental factors that influence  sensory outcome, \eg,  camera position, time of day, weather conditions.
As  sensing does not influence the environment, all random  vectors are  mutually independent. 
Our goal  is to cluster    $\mathbf{x}^{(i)}$ instances by their underlying  $\mathbf{X}^{(i)}$ distributions.

\subsection{Quasi-ideal Features}
\label{sec:quasi-ideal}

An ideal feature descriptor has statistically independent dimensions. However, this is hard to ensure  in practice. A more  practical  assumption is the  quasi-independence  in  condition \ref{def:ind}.

\begin{condition}
\label{def:ind}
\emph{Quasi-independent:}
A set of $k$ random variables $\{X^{(i)}: i\in \Set_k\}$ are quasi-independent, if and only if,  as $k\to \infty$, each random variable has
 finite number of pairwise dependencies. That is,
 let $\mathcal{A}$ be the set of all pairwise dependent variables
and $\mathbf{1}$ be an indicator function,
there  exists $t\in \mathbb{Z}^+$ such that
$$
\sum_{j=1}^k \mathbf{1}_{\mathcal{A}}(\{X^{(i)}, X^{(j)}\}) \leq t, \quad \forall i \in \Set_k.
$$

\end{condition}
\noindent Quasi-independence is approximately  equivalent to requiring  information increases proportionally with number of random variables.
When the random variables are concatenated into a feature, we term it quasi-ideal.

\begin{condition}
\label{def:prac}
\emph{Quasi-ideal:} A $k$-dimensional random vector $\mathbf{X}$ is quasi-ideal, if and only if, as $k\to \infty$,
the variance of all its elements are finite and the set of all its elements, $\{X_i: i\in \Set_k\}$,
is quasi-independent.
\end{condition}

Treating  the links of an infinitely long Markov chain as  feature dimensions
would create a quasi-ideal feature. This is useful in computer vision, as pixel values  have Markov like properties  of  some statistical dependence on
 neighbors but  long range statistical independence. Hence,
many image based descriptors can be modeled as quasi-ideal.

Practicality aside, quasi-ideal features have useful mathematical properties,  as 
they permit 
 the law of large numbers to apply to distance metrics. This leads to  interesting results
summarized
 in Lemmas \ref{lemma:base}
and   \ref{lemma:pair}.

\begin{lemma}
\label{lemma:base}
Let $\mathbf{X}$ be  a  quasi-ideal random vector with dimension $k\to \infty$. The normalized squared $\Ltwo$ norm of any instance is almost surely  a constant:
\begin{align}
d(\mathbf{X})=\frac{\|\mathbf{X}\|^2}{k}=\frac{\sum_{i=1}^k X_i^2}{k}\sim D\left(m, \sigma^2\right), \sigma \to 0,
\end{align}
\end{lemma}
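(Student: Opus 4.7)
The plan is to prove the lemma via a second-moment (Chebyshev-type) argument adapted to the quasi-independence structure, essentially a weak law of large numbers for the squared components. First I would set $Y_i = X_i^2$, so that $d(\mathbf{X}) = \frac{1}{k}\sum_{i=1}^k Y_i$ is simply the empirical average of the $Y_i$. The mean is $m_k = \frac{1}{k}\sum_{i=1}^k \E[X_i^2]$, which is uniformly bounded in $k$ under the finite-variance clause of Condition~\ref{def:prac}; I would either assume $m_k \to m$ or, more cleanly, state the conclusion as ``concentration of $d(\mathbf{X})$ around $\E[d(\mathbf{X})]$'' and call that limit $m$.

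The heart of the proof is bounding the variance of the sum. Expanding,
$$\Var(d(\mathbf{X})) \;=\; \frac{1}{k^2}\sum_{i=1}^k\sum_{j=1}^k \cov(X_i^2, X_j^2).$$
Condition~\ref{def:ind} is the workhorse here: for every $i$, at most $t$ indices $j$ satisfy that $X_i$ and $X_j$ are pairwise dependent; for all other $j$, independence of $X_i$ and $X_j$ implies independence of $X_i^2$ and $X_j^2$, so $\cov(X_i^2, X_j^2) = 0$. Consequently at most $k(t+1)$ of the $k^2$ covariance terms are nonzero. By Cauchy--Schwarz each nonzero term is bounded by $\sqrt{\Var(X_i^2)\Var(X_j^2)}$, which I would control by a uniform constant $C$. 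Combining,
$$\Var(d(\mathbf{X})) \;\le\; \frac{(t+1)\,C}{k} \;\longrightarrow\; 0, \quad k\to\infty,$$
which is exactly the $\sigma \to 0$ claim. A one-line appeal to Chebyshev's inequality then shows $d(\mathbf{X})$ concentrates at $m$, justifying the ``almost surely a constant'' phrasing in the limit.

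The main obstacle I expect is technical rather than conceptual: Condition~\ref{def:prac} only guarantees $\Var(X_i) < \infty$, whereas the variance bound on $Y_i = X_i^2$ requires a uniform fourth-moment bound $\sup_i \E[X_i^4] < \infty$. I would either tacitly strengthen the quasi-ideal condition to include this (reasonable for bounded image descriptors), or replace the almost-sure statement with convergence in probability, which is the genuine content of the Chebyshev argument. A secondary subtlety is that ``almost surely a constant'' is a statement about a limiting family $\{d(\mathbf{X}_k)\}$ of random variables rather than a single realization, and the proof should make clear that what is really being asserted is that the distribution of $d(\mathbf{X})$ collapses to a point mass as $k\to\infty$.
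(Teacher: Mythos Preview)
Your proposal is correct and follows essentially the same approach as the paper: expand $\Var(d(\mathbf{X}))=\frac{1}{k^2}\sum_{i,j}\cov(X_i^2,X_j^2)$, use quasi-independence to bound each row sum by a constant, and conclude the variance is $O(1/k)$. If anything, you are more careful than the paper, which simply asserts the row-sum bound $\sum_j \cov(X_i^2,X_j^2)\le t$ without separating the count of nonzero terms from the size of each term; your explicit Cauchy--Schwarz step and your flagging of the implicit fourth-moment assumption fill in details the paper leaves tacit.
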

\begin{proof}

As $\mathbf{X}$ is quasi-ideal, the set of squared elements $\{X_1^2, X_2^2, \hdots, X_k^2\}$ form a covariance matrix where the sum of elements in any row is
bounded by some positive real number $t$, \ie,  
\begin{align}
\sum_{j=1}^k \cov(X_i^2, X_j^2) \leq t,  \quad\forall i \in \Set_k.
\end{align}
This implies
\small
\begin{align}
\Var(d(\mathbf{X}))=\frac{\sum_{i=1}^k\sum_{j=1}^k \cov(X_i^2, X_j^2)}{k^2} < \frac{kt}{k^2}<\frac{t}{k}.
\end{align}
\normalsize
Thus, as $k\to \infty$, variance tends to zero.
\end{proof}

\begin{lemma}
\label{lemma:pair}
Let $\mathbf{X}$ and $\mathbf{Y}$ be statistically independent, quasi-ideal random vectors. As the dimensions $k\to \infty$.
\begin{align}
d(\mathbf{X}-\mathbf{Y})=\frac{\|\mathbf{X}-\mathbf{Y}\|^2}{k}\sim D\left(m, \sigma^2\right), \sigma \to 0,
\end{align}
where
\hbox{$m= \Var(\mathbf{X})+ \Var(\mathbf{Y})+ d(\E(\mathbf{X})-\E(\mathbf{Y})).$}
\end{lemma}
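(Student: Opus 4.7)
The plan is to reduce this lemma to Lemma \ref{lemma:base} by applying it to the new random vector $\mathbf{Z} = \mathbf{X} - \mathbf{Y}$, and then separately compute the limiting mean. So the proof splits into (a) verifying that $\mathbf{Z}$ inherits the quasi-ideal property from $\mathbf{X}$ and $\mathbf{Y}$, (b) invoking Lemma \ref{lemma:base} to conclude the variance of $d(\mathbf{Z})$ vanishes, and (c) a direct expectation calculation that identifies the constant $m$.

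For step (a), I would first check the finite-variance condition componentwise. Since $\mathbf{X} \ci \mathbf{Y}$, we have $\var(X_i - Y_i) = \var(X_i) + \var(Y_i)$, which is finite because both $\mathbf{X}$ and $\mathbf{Y}$ are quasi-ideal. For quasi-independence of $\{Z_i = X_i - Y_i : i \in \Set_k\}$, the key observation is that $(X_i, Y_i)$ is independent of $(X_j, Y_j)$ whenever $X_i$ is independent of $X_j$ and $Y_i$ is independent of $Y_j$ (using $\mathbf{X} \ci \mathbf{Y}$ to decouple cross terms). Hence if $t_X$ and $t_Y$ are the dependency bounds guaranteed by Condition \ref{def:ind} for $\mathbf{X}$ and $\mathbf{Y}$, then each $Z_i$ is dependent on at most $t_X + t_Y$ other $Z_j$'s, so $\{Z_i\}$ is quasi-independent with constant $t = t_X + t_Y$. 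Combined with finite componentwise variances, this makes $\mathbf{Z}$ quasi-ideal.

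For step (b), Lemma \ref{lemma:base} applied to $\mathbf{Z}$ immediately gives $d(\mathbf{X}-\mathbf{Y}) = d(\mathbf{Z}) \sim D(m,\sigma^2)$ with $\sigma \to 0$ as $k \to \infty$. For step (c), I compute
\begin{align}
m = \E[d(\mathbf{Z})] = \frac{1}{k}\sum_{i=1}^k \E[(X_i - Y_i)^2].
\end{align}
Expanding each summand via $\E[U^2] = \var(U) + (\E U)^2$ and using $X_i \ci Y_i$ to split the variance,
\begin{align}
\E[(X_i - Y_i)^2] = \var(X_i) + \var(Y_i) + (\E X_i - \E Y_i)^2,
\end{align}
and averaging over $i$ yields $m = \Var(\mathbf{X}) + \Var(\mathbf{Y}) + d(\E(\mathbf{X}) - \E(\mathbf{Y}))$, matching the claim.

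The only nontrivial step is (a), specifically verifying that the dependency structure of the difference vector is controlled. The argument is mostly bookkeeping on which pairs $(i,j)$ can induce dependence between $Z_i$ and $Z_j$, and the independence of $\mathbf{X}$ and $\mathbf{Y}$ is what lets the two dependency budgets simply add rather than interact multiplicatively. Once this is in hand, the rest is a direct application of Lemma \ref{lemma:base} plus a routine expectation calculation.
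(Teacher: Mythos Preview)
Your proposal is correct and follows essentially the same approach as the paper: show $\mathbf{X}-\mathbf{Y}$ is quasi-ideal, invoke Lemma~\ref{lemma:base} for the vanishing variance, and compute the mean by expanding $\E[(X_i-Y_i)^2]$ using independence. Your step~(a) is in fact more careful than the paper, which simply asserts that $\mathbf{X}-\mathbf{Y}$ is quasi-ideal without spelling out the $t_X+t_Y$ dependency bound.
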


\begin{proof}
As $\mathbf{X}$ and $\mathbf{Y}$ are quasi-ideal,  random vector $\mathbf{X}-\mathbf{Y}$ is also quasi-ideal. Using Lemma~\ref{lemma:base}, we know $d(\mathbf{X}-\mathbf{Y})$'s variance tends to zero. The expression for its mean is:
\small
\begin{align}
m=&\E\left(\frac{\|\mathbf{X}-\mathbf{Y}\|^2}{k}\right) \nonumber \\
=&\sum_{i=1}^k \frac{\E\left(X_i^2\right)}{k}+\frac{\E\left(Y_i^2\right)}{k}-\frac{2\E\left(X_iY_i\right)}{k} \nonumber\\
=&\sum_{i=1}^k \frac{ \var\left(X_i\right)+\var\left(Y_i\right) }{k} \nonumber\\
&+ \sum_{i=1}^k  \frac{\E\left(X_i\right)^2-2\E\left(X_i\right) \E\left(Y_i\right)+ \E\left(Y_i\right)^2}{k} \nonumber\\
=& \Var(\mathbf{X})+\Var(\mathbf{Y}) +\sum_{i=1}^k\frac{ \left(\E\left(X_i\right)-\E\left(Y_i\right)\right)^2}{k} \nonumber\\
=& \Var(\mathbf{X})+\Var(\mathbf{Y}) +d(\E(\mathbf{X})-\E(\mathbf{Y})). \nonumber
\end{align}
\end{proof}\normalsize
Lemma \ref{lemma:pair} is similar in spirit to Beyer \etal's~\cite{beyer1999nearest}   ``contrast-loss'' proof.
However, it accommodates realizations from different distributions,  introduces a more practical quasi-independence assumption
and is simpler to derive.

Unlike~\cite{aggarwal2001surprising},  we consider ``contrast-loss'' an opportunity not a liability.  Lemma \ref{lemma:pair} proves that distance between instances almost always
depend only on the mean and variances of the underlying distributions and not on  instances' values. This makes  distance between instances a potential   proxy  for identifying their underlying distributions.

\subsection{Distribution-clusters }
\label{sec:dc}

Identifying  data points from ``similar'' distributions requires a definition of ``similarity''.  Ideally, we would follow Lemma \ref{lemma:pair}'s intuition and define ``similarity''   as having the same  mean and average variance. However,  the definition needs  to accommodate  dimensions tending to infinity. This leads to a  distribution-cluster based ``similarity'' definition.

Let $\Omega_\bx =\{\mathbf{X}^{(i)}: i\in \Set_n\} $ be a set of independent $k$-dimensional
random vectors.
$k \to \infty$ such that the random vectors satisfy quasi-ideal conditions.

\begin{condition}
\label{cond:dc}
\emph{Distribution-cluster:} $\Omega_\bx$ forms a distribution-cluster if and only if:
\begin{itemize}
\item The normalized squared $\Ltwo$ norm distance between any two distribution mean is zero, \ie,
\begin{equation}
\label{eq:equi_mean}
d(\E(\mathbf{X}^{(i)})-\E(\mathbf{X}^{(j)}))=0, \quad \forall i,j \in \Set_n; \\
\end{equation}
\item All  distributions have  the same average variance, \ie,
\begin{equation}
\label{eq:equi_var}
 \Var(\mathbf{X}^{(i)})= \Var(\mathbf{X}^{(j)}), \quad \forall i,j \in \Set_n.
\end{equation}
\end{itemize}
\end{condition}

%
%

As dimensions tend to infinity,  instances of a distribution-cluster  concentrate on a ``thin-shell''. This is proved in
theorem \ref{theorem:diff} and
validates  \fref{fig:stat3}'s intuition. The ``hollow-center'' means  data points from apparently overlapping distributions almost never  mingle,  creating the
potential for
  clustering  chaotic data.

\begin{theorem}
\label{theorem:diff}
If $\Omega_\bx$ is a distribution-cluster with average variance $v$,
the normalized squared distance of its instances from the cluster centroid will almost  surely
be  $v$, \ie, $\Omega_\bx$'s instances   form a thin annulus  about it's centroid.
\end{theorem}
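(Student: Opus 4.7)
The plan is to reduce the claim directly to Lemma~\ref{lemma:pair} by treating the cluster centroid as a degenerate (deterministic) random vector. Because condition~\ref{cond:dc} forces $d(\E(\mathbf{X}^{(i)})-\E(\mathbf{X}^{(j)}))=0$ for every pair, all the population means are interchangeable under the normalized squared $\Ltwo$ distance. So I would fix the centroid to be $\mathbf{c}=\E(\mathbf{X}^{(1)})$ (any other $\E(\mathbf{X}^{(j)})$ would work identically); this is the natural reading of ``cluster centroid'' consistent with the theoretical, population-level setup of Section~\ref{sec:dc}.

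Next I would check that the constant vector $\mathbf{c}$ qualifies as a quasi-ideal random vector so that Lemma~\ref{lemma:pair} applies. Each coordinate of $\mathbf{c}$ has zero variance, the set of coordinates is trivially quasi-independent (no pairwise dependencies at all), and $\mathbf{c}$ is statistically independent of every $\mathbf{X}^{(i)}$ since it carries no randomness. With these boxes ticked, I can invoke Lemma~\ref{lemma:pair} with $\mathbf{X}=\mathbf{X}^{(i)}$ and $\mathbf{Y}=\mathbf{c}$.

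Applying the lemma gives $d(\mathbf{X}^{(i)}-\mathbf{c})\sim D(m,\sigma^2)$ with $\sigma\to 0$ and $m=\Var(\mathbf{X}^{(i)})+\Var(\mathbf{c})+d(\E(\mathbf{X}^{(i)})-\mathbf{c})$. The first term is $v$ by \eref{eq:equi_var}, the second is $0$ because $\mathbf{c}$ is deterministic, and the third is $0$ by \eref{eq:equi_mean} combined with our choice of $\mathbf{c}$. Hence $d(\mathbf{X}^{(i)}-\mathbf{c})\to v$ almost surely. Since this holds for every $i\in\Set_n$, all instances of $\Omega_\bx$ concentrate on a thin annulus of squared radius $v$ about $\mathbf{c}$, which is exactly the claim.

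The main obstacle is really just the conceptual check that Lemma~\ref{lemma:pair} is legitimately applicable when one of its arguments is a constant vector; once one accepts the deterministic-as-degenerate-random-vector viewpoint, the computation collapses into three one-line simplifications. A secondary subtlety is the interpretation of ``centroid'': interpreting it as a sample mean would introduce an extra $O(v/n)$ term from the empirical average, so it is important to anchor the proof at the population centroid licensed by \eref{eq:equi_mean}.
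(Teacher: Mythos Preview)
Your proposal is correct and is essentially the paper's own proof: the paper likewise sets $\mathbf{m}=\E(\mathbf{X}^{(1)})$, takes $\mathbf{Y}$ in Lemma~\ref{lemma:pair} to be the degenerate (variance-zero) random vector at $\mathbf{m}$, and reads off $p\bigl(d(\mathbf{X}^{(i)}-\mathbf{m})=v\bigr)\to 1$. Your additional verification that a deterministic vector is quasi-ideal and independent, and your remark on the population-versus-sample centroid, simply make explicit what the paper leaves implicit.
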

\begin{proof}
 Without loss of generality, let $\mathbf{m}=E(\bx^{(1)})$.
Let $\bx$ in Lemma \ref{lemma:pair} be $\bx^{(i)}$ and  $\by$ in Lemma \ref{lemma:pair} be  a  distribution with mean $\mathbf{m}$ and variance $0$.
This gives an expression:
\begin{equation}
p\left(d(\bx^{(i)}-\mathbf{m})= v \right) \to 1 \quad \forall i \in \Set_n
\end{equation}

\end{proof}

\subsection{Grouping  Data by Distribution}
\label{sec:id_c}

We seek to group a set of data points by their underlying  distribution-clusters. This
is achieved by proving that data points of a distribution-cluster share unique identifiers that we term cluster-indicators.

\begin{theorem}
\label{theorem:final}
\emph{Cluster-indicator:}
Let $\by$ be a quasi-ideal random vector that is  independent of all  $\Omega_\bx$'s random vectors.

 $\Omega_\bx$ forms  a distribution-cluster  (\emph{c.f}~condition \ref{cond:dc}), if and only if,
for any valid  random vector  $\by$,
there exists a real number $b_\mathbf{Y}$
such that
\begin{equation}
\label{eq:y_proof}
\begin{split}
p(d(\by-\bx^{(i)}) =b_\mathbf{Y}) \to 1, \quad \forall i\in \Set_n.
\end{split}
\end{equation}

%

\end{theorem}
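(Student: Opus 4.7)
The plan is to prove both directions of the biconditional by leaning on Lemma~\ref{lemma:pair}, which tells us that $d(\by-\bx^{(i)})$ concentrates in probability on
$$\mu_i := \Var(\by) + \Var(\bx^{(i)}) + d(\E(\by)-\E(\bx^{(i)})).$$
So the theorem reduces to asking when $\mu_i$ is independent of $i$, first (forward) given the distribution-cluster conditions~(\ref{eq:equi_mean})--(\ref{eq:equi_var}), and second (reverse) for \emph{every} admissible $\by$.

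For the forward direction I would fix $\by$ and compare $\mu_i$ with $\mu_j$. The variance terms cancel by~(\ref{eq:equi_var}), so it remains to show that $d(\E(\by)-\E(\bx^{(i)})) - d(\E(\by)-\E(\bx^{(j)}))$ vanishes. Writing $\mathbf{n}=\E(\by)$ and $\mathbf{m}_\ell=\E(\bx^{(\ell)})$, expand
\begin{equation*}
\|\mathbf{n}-\mathbf{m}_i\|^2 = \|\mathbf{n}-\mathbf{m}_j\|^2 + \|\mathbf{m}_j-\mathbf{m}_i\|^2 + 2\langle \mathbf{n}-\mathbf{m}_j,\, \mathbf{m}_j-\mathbf{m}_i\rangle,
\end{equation*}
divide by $k$, and apply Cauchy--Schwarz to bound the cross term by $2\sqrt{d(\mathbf{n}-\mathbf{m}_j)\,d(\mathbf{m}_j-\mathbf{m}_i)}$. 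Since~(\ref{eq:equi_mean}) forces $d(\mathbf{m}_j-\mathbf{m}_i)\to 0$ while $d(\mathbf{n}-\mathbf{m}_j)$ stays bounded (Lemma~\ref{lemma:pair} applied to $\by$ against a point mass at $\mathbf{m}_j$ gives a finite mean), both the quadratic and cross terms vanish, so $\mu_i=\mu_j$ in the limit and $b_\mathbf{Y}$ is this common value.

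For the reverse direction, the strategy is to probe the $\bx^{(i)}$'s with carefully chosen \emph{test} vectors. Since any deterministic sequence has zero coordinatewise variance and no pairwise dependencies, a constant vector is quasi-ideal and independent of all $\bx^{(i)}$. For each index $j\in\Set_n$, let $\by_j$ be the constant vector equal to $\E(\bx^{(j)})$. Applying the hypothesis and Lemma~\ref{lemma:pair}, the limiting value $b_{\by_j}$ satisfies
\begin{equation*}
\Var(\bx^{(i)}) + d\!\left(\E(\bx^{(j)})-\E(\bx^{(i)})\right) = b_{\by_j} \quad \forall i\in\Set_n.
\end{equation*}
Setting $i=j$ gives $b_{\by_j}=\Var(\bx^{(j)})$. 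Swapping the roles of $i$ and $j$ and adding the two resulting equations cancels the variance terms and yields $2\, d(\E(\bx^{(i)})-\E(\bx^{(j)}))=0$, which is~(\ref{eq:equi_mean}); feeding this back then forces $\Var(\bx^{(i)})=\Var(\bx^{(j)})$, which is~(\ref{eq:equi_var}).

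The main obstacle I anticipate is the forward direction's bookkeeping of the $k\to\infty$ limit: Lemma~\ref{lemma:pair} only gives almost-sure concentration in the limit, so the Cauchy--Schwarz bound must be stated as a convergence statement (probabilistic, with a vanishing cross term) rather than an equality at finite $k$. The reverse direction is algebraically straightforward once one realizes that constant vectors are admissible test inputs; the only subtlety is verifying that the degenerate vector $\by_j$ satisfies Condition~\ref{def:prac}, which is immediate because all its variances and dependencies are zero.
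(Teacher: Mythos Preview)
Your proposal is correct and follows essentially the same skeleton as the paper: both directions reduce, via Lemma~\ref{lemma:pair}, to statements about the limiting means $\mu_i$, and the converse is obtained by probing with test vectors whose mean equals $\E(\bx^{(j)})$ and then solving the resulting pair of linear equations for the two unknowns $\Var(\bx^{(i)})-\Var(\bx^{(j)})$ and $d(\E(\bx^{(i)})-\E(\bx^{(j)}))$.

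The differences are minor but worth noting. For the converse, the paper takes $\by$ to be an \emph{independent copy} of $\bx^{(j)}$ (so $\Var(\by)=\Var(\bx^{(j)})$), whereas you take $\by$ to be the \emph{deterministic} vector $\E(\bx^{(j)})$ (so $\Var(\by)=0$); the two choices produce the same linear system up to an additive constant and the algebra is identical. Your choice has the small advantage that verifying the quasi-ideal and independence hypotheses is trivial for a constant. For the forward direction, the paper simply asserts that~\eqref{eq:y_proof} is ``a direct result of Lemma~\ref{lemma:pair},'' implicitly reading~\eqref{eq:equi_mean} as $\E(\bx^{(i)})=\E(\bx^{(j)})$; your Cauchy--Schwarz argument is strictly more careful, since it handles the case where $d(\E(\bx^{(i)})-\E(\bx^{(j)}))\to 0$ without the means being literally equal at finite $k$. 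Neither departure changes the structure of the proof.
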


\begin{proof}
%
%


First,  the \emph{if} part is proved. Given $\Omega_\mathbf{X}$ is a  distribution-cluster,
 \eref{eq:y_proof} is a direct result of Lemma \ref{lemma:pair}, where the distance between
instances of quasi-ideal distributions are almost surely determined by the distributions' mean and average variances.

Moving on to the \emph{only if} proof, where it is given that  $\Omega_\bx$ satisfies \eref{eq:y_proof}'s cluster-indicator.
 \textit{W.l.o.g}, we consider only elements $\mathbf{X}^{(1)}, \mathbf{X}^{(2)}$.  Let $\mathbf{Y}$ be independent but identically
distributed with    $\mathbf{X}^{(1)}$.

From Lemma \ref{lemma:pair}, we know that
\begin{itemize}
\item $p(d(\mathbf{Y}-\mathbf{X}^{(1)})=m_1 )\to 1,$ where

\noindent   $m_1=2\Var(\mathbf{X}^{(1)});$
\item

$p(d(\mathbf{Y}-\mathbf{X}^{(2)})= m_2)\to 1,$ where
$$m_2=\Var(\mathbf{X}^{(1)})+\Var(\mathbf{X}^{(2)}) +d(\E(\mathbf{X}^{(1)})-\E(\mathbf{X}^{(2)})).$$

\end{itemize}

 \Eref{eq:y_proof} means $b_{\mathbf{Y}}=m_1= m_2$, implying:\small
 \begin{equation}
\label{eq:one_diff}
\begin{split}
&2\Var(\mathbf{X}^{(1)})\\
=&\Var(\mathbf{X}^{(1)})+\Var(\mathbf{X}^{(2)}) +d(\E(\mathbf{X}^{(1)})-\E(\mathbf{X}^{(2)})).
\end{split}
\end{equation}
\normalsize

Similarly,  treating $\by$ as
independent but identically
distributed with  $\mathbf{X}^{(2)}$ implies
\small
 \begin{equation}
\label{eq:two_diff}
\begin{split}
&2\Var(\mathbf{X}^{(2)})\\
=&\Var(\mathbf{X}^{(1)})+\Var(\mathbf{X}^{(2)}) +d(\E(\mathbf{X}^{(1)})-\E(\mathbf{X}^{(2)})).
\end{split}
\end{equation}
\normalsize

Solving \eqref{eq:one_diff} and  \eqref{eq:two_diff} yields
$$\Var(\mathbf{X}^{(1)})=\Var(\mathbf{X}^{(2)}),  \quad d(\E(\mathbf{X}^{(1)})-\E(\mathbf{X}^{(2)}))=0.$$
This proves that $\mathbf{X}^{(1)}, \mathbf{X}^{(2)}$ are members of a distribution-cluster, (\emph{c.f}~condition \ref{cond:dc}).
Repeating the process with all element pairs of $\Omega_\bx$ will show they belong
to  one  distribution-cluster. This completes the \emph{only if} proof.

\end{proof}

As argued in \sref{sec:images_ind}, image descriptors  can be modeled  as instances
of independent, quasi-ideal  random vectors, \ie, a set of image descriptors $\Omega_\mathbf{x} =\{\mathbf{x}^{(i)}: i\in \Set_n\} $ can be considered
 instances of the respective random vectors in $\Omega_{\bx}$. 
Theorem \ref{theorem:final} implies that descriptors from the
same distribution-cluster will (almost surely) be equi-distance to any other descriptor. Further, it  is a unique
property of descriptors from the same distribution-cluster. This allows
descriptors to be unambiguously assigned to  distribution-clusters. In summary,
distribution-clustering of images (and other passive sensing data)  is a well-posed problem, per the definition  in McGraw-Hill dictionary of scientific and technical terms~\cite{parker1984mcgraw}:

\begin{itemize}
\item  A solution exist.  This follows from  Theorem \ref{theorem:final}'s  \emph{if} condition where
cluster-indicators  almost surely (in practice it can be understood as surely) identify   all  data points
of a distribution-cluster;

\item A solution is unique.
This follows from  Theorem \ref{theorem:final}'s  \emph{only  if} condition which
means  cluster-indicators   almost never confuse data points of different distributions.
This can also be understood as proving  intrinsic  separability of instances from different distribution-clusters;

\item
The solution's behavior changes continuously with the initial conditions.
The $b_{\mathbf{Y}}$ cluster-indicator in \eref{eq:y_proof}   vary continuously
with  the mean and
average variance of the underlying  distribution-cluster. This follows from
Lemma \ref{lemma:pair}'s
 expression for $b_{\mathbf{Y}}$.

\end{itemize}

\section{Distribution-Clustering (practical)}
Our goal is to use theorem \ref{theorem:final}'s cluster-indicators  to group data points  by  their underlying distributions.
From theorem \ref{theorem:final}, we know that if $\mathbf{x}^{(i)},\mathbf{x}^{(j)}$ are instances of the same distribution-cluster,
the affinity matrix's  $i,j$ rows/ columns    will be near identical. To exploit this, we  define second order features  as columns of the
affinity matrix.
Clustering is achieved by grouping  second-order features.

\subsection{Second-order Affinity}
Let  $ \Omega_{\mathbf{x}} =\{\mathbf{x}^{(i)}: i\in {S}_n\}$ be a set of realizations, with an associated affinity matrix $\mathbf{A}_{n\times n}$:
\begin{equation}
\label{eq:A}
\mathbf{A}(i,j)=d(\mathbf{x}^{(i)}-\mathbf{x}^{(j)}).
\end{equation}

The  columns of  $\mathbf{A}$  are denoted as $\mathbf{a}^{(i)}=\mathbf{A}(:,i)$. Treating  columns as features yields   a set of second-order features
$\{\mathbf{a}^{(i)}: i\in \Set_n\}$. The elements of $\mathbf{a}^{(i)}$ encodes the distance between vector $\mathbf{x}^{(i)}$ and all others in $\Omega_{x}$.

From theorem \ref{theorem:final}, we know that if and only if the distributions underlying
 $\mathbf{a}^{(i)}, \mathbf{a}^{(j)}$ come  from the same distribution-cluster, all their elements, except the $i^{th}$ and $j^{th}$ entries,
are almost surely  identical.  This is encapsulated as a second-order distance:
\begin{equation}
\label{eq:clus_d}
d' (\mathbf{a}^{(i)},\mathbf{a}^{(j)})=\sum_{k\in \Set_n\setminus\{i,j\}}\left( a^{(i)}_k-a^{(j)}_k\right)^2.
\end{equation}
which should be zero if $i,j$ belong to the same distribution-cluster. 
The presence of clusters of identical rows causes the   post-clustering  affinity matrix to display a distinctive blocky pattern shown in \fref{fig:qual}.

Second order distance can be embedded in existing clustering algorithms.
For techniques like spectral-clustering~\cite{zelnik2005self} which require an affinity matrix,
a second-order affinity matrix is  defined as  $\mathbf{A}'_{n\times n}$: $$\mathbf{A}'(i,j)=d' \left(\mathbf{a}^{(i)},\mathbf{a}^{(j)}\right ).$$
If  $n$ is large, $d' \left(\mathbf{a}^{(i)},\mathbf{a}^{(j)}\right )\approx \|\mathbf{a}^{(i)}-\mathbf{a}^{(j)}\|^2$.
This allows   second order $\{\mathbf{a}^{(i)}\}$  features to be used directly in clustering algorithms like k-means, which require
feature inputs.
%

Incorporating second-order constraints into a prior clustering algorithm does not fully utilize
theorem \ref{theorem:final}. 
This is because realizations of the same distribution-cluster   have zero second-order-distance, while  most clustering
algorithms only apply a distance penalty. This motivates  an alternative  solution we term
distribution-clustering

\subsection{Implementing  Distribution-clustering}
\label{sec:alt_clus}
%

Distribution-clustering 
 can be understood as  identifying indices whose mutual second-order distance is near zero.
These are grouped into one  cluster and the process repeated to identify more clusters.  

An algorithmic overview  is as follows.  
 Let $i,j$ be the indices of its smallest off-diagonal entry of affinity matrix $\mathbf{A}$. If $\{\mathbf{x}^{(i)}, \mathbf{x}^{(j)}\}$ are instances of  a distribution-cluster,  Lemma \ref{lemma:pair}
states the average cluster variance is
$\mathbf{A}(i,j)/2$. Thus they are the data-set's lowest average variance  distribution-cluster. Initialize $\{\mathbf{x}^{(i)}, \mathbf{x}^{(j)}\}$
as a candidate distribution-cluster. New members are recruited by finding vectors whose average second-order distance from all  distribution-cluster candidates is
 less than  threshold $\tau$. If a candidate distribution-cluster grows to have no less than   $m$ members,  accept it.
Irrespective of the outcome, remove $\{\mathbf{x}^{(i)}, \mathbf{x}^{(j)}\}$ from  consideration as candidate clusters.
Repeat on  un-clustered data  till all data points are  clustered or it is impossible to form a candidate cluster.  Some data may not  be accepted in any
cluster and remain outliers. Details are  in Algorithm \ref{algo:dis_clus}.
For whitened descriptors~\cite{jegou2012negative,arandjelovic2012three}, typical parameters are $\tau=0.07,m=5$. 

\begin{algorithm}[]
\textbf{Input:} Affinity matrix $\mathbf{A}_{n\times n}$\\
\textbf{Output:} Vector of cluster labels $\mathbf{L}$ \\
\textbf{Initialization:} 1) Set of un-assigned image indices: $\mathcal{S}_n =\{1, 2,\hdots, n\}$; \hskip 0.2cm 2)
Set of non-diagonal elements of $\mathbf{A}$: $\mathcal{E}=\{\mathbf{A}(i,j)\}, i \neq j$; \hskip 0.2cm 3) Initialize $\mathbf{L} =\mathbf{0}_{n\times1}$;
4) Set label counter $c=1$\;
 \While {$\Set_n \neq \emptyset$}{
  	Find $i$ and $j$ corresponding to $\min(\mathcal{E})$\;
    Create candidate cluster  $\mathcal{H} =\{i,j\}$\;
	 \For{$s\in \Set_n$}{
		\If{$\frac{1}{|\mathcal{H}|}\sum_{h\in \mathcal{H}} d'(\mathbf{a}^{(s)}, \mathbf{a}^{(h)})< \tau$,}{
			insert $s$ into $\mathcal{H}$
		}
	}
	\eIf{$|\mathcal{H}|<$ m}{
 		delete  $\mathbf{A}(i,j)$ from $\mathcal{E}$  \;
 	}{
 		 accept  cluster $\mathcal{H}$ and assign its elements  a unique label\;
		$\mathbf{L}(h)=c, \;\;\;\; \forall h \in \mathcal{H}$\;
		$c:=c+1$\;

		\For{$h \in \mathcal{H}$ } {
			delete $\mathbf{A}(:,h)$ and $\mathbf{A}(h,:)$ from set $\mathcal{E}$\;
			delete $h$ from set $\Set_n$.
		}
	}
 }
 \caption{Distribution-clustering \label{algo:dis_clus}}
\end{algorithm}
Relative to other clustering techniques, distribution-clustering  has
many theoretical and practical advantages:
\begin{itemize}
\item Clustering  chaotic data is a well-posed problem  (\cf \sref{sec:id_c});
\item No pre-definition of cluster numbers is required;
\item Innate robustness to ``outliers'' which form  their own clusters.
\end{itemize}

\section{Clustering}

\paragraph{Simulation Results} use quasi-ideal features created from a mixture of 
uniform and Gaussian distributions. To evaluate 
the effect of increasing dimensionality, the number of dimensions is increased from 
$1$ to $4000$. Two sets are evaluated. The ``Easy'' set has 
wide separation  of  underlying
distributions while the ``Difficult'' set has little separation. 
Results are presented in
\fref{fig:simu_dimension}. We  compare three different distance measures on k-means clustering~\cite{Lloyd82leastsquares,arthur2007k}: $\ell_2$ norm, $\ell_1$ norm and our proposed second-order
distance in \eref{eq:clus_d}. We also compare spectral clustering~\cite{zelnik2005self} with   $\ell_2$  and  second-order distance. Finally, we provide a system to system comparison between 
our distribution-clustering,  k-means and  spectral-clustering. At low dimensions, the second-order distance gives results comparable to other algorithms.  However, performance steadily improves with number of dimensions. Notably, only algorithms
which employ second-order distance  are effective on the ``Difficult'' set. This validates the theoretical prediction that  (the previously ill-posed problem of) clustering  chaotic data is  made  well-posed 
by the second-order distance.  

To  study the effect of mean separation on clustering performance,
we repeat the previous experiment under similar conditions,   except  the number of dimensions are kept constant and the mean separation
progressively reduced to zero. Results are presented in \fref{fig:simu}. Note that second-order distance ensures
clustering performance is relatively invariant to mean  separation.

\begin{figure}[t]
\centering
\setlength\tabcolsep{1.5pt} 
\begin{tabular}{cccc}
  \includegraphics[width=0.34\linewidth]{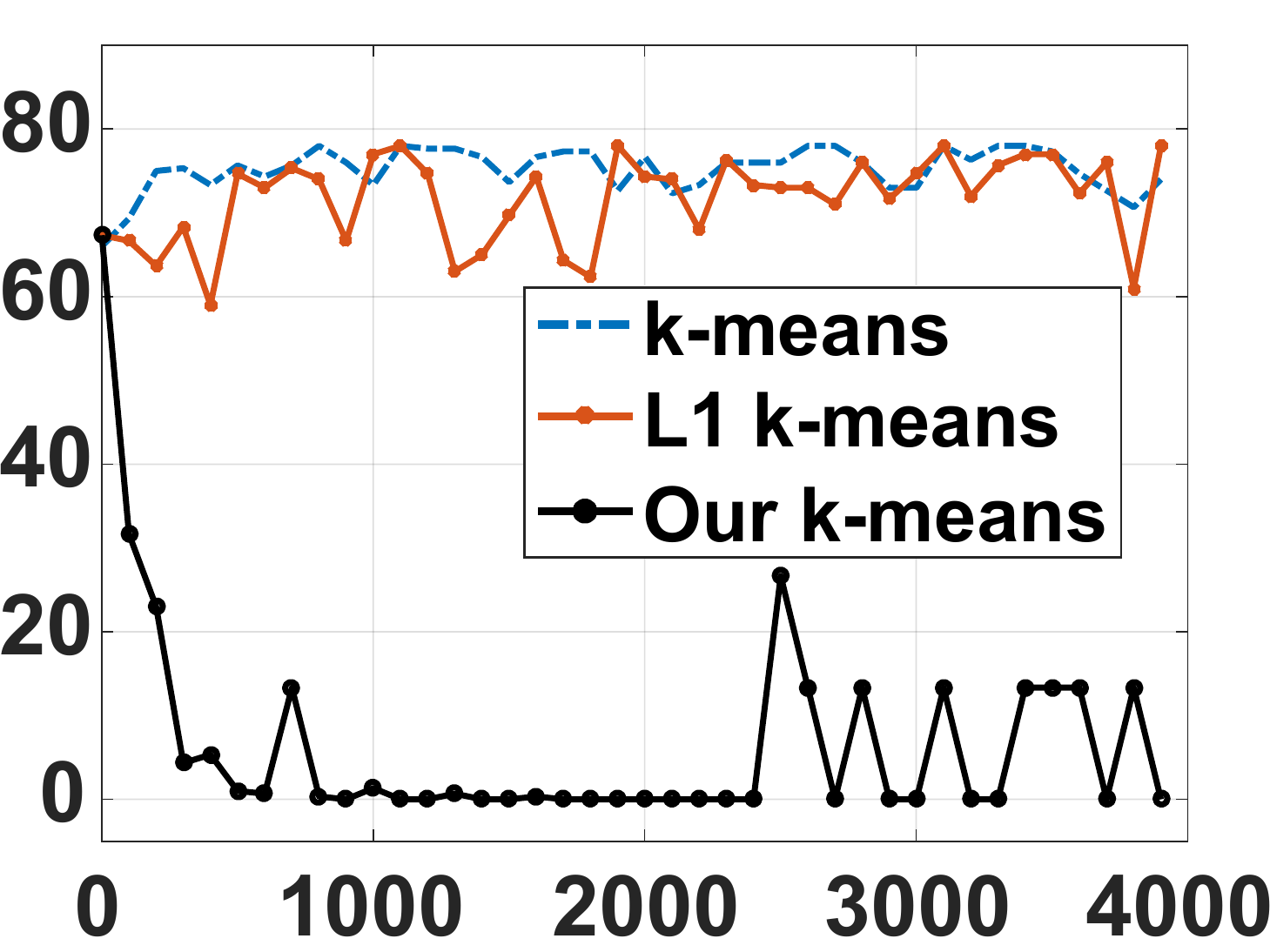}
  &\includegraphics[width=0.34\linewidth]{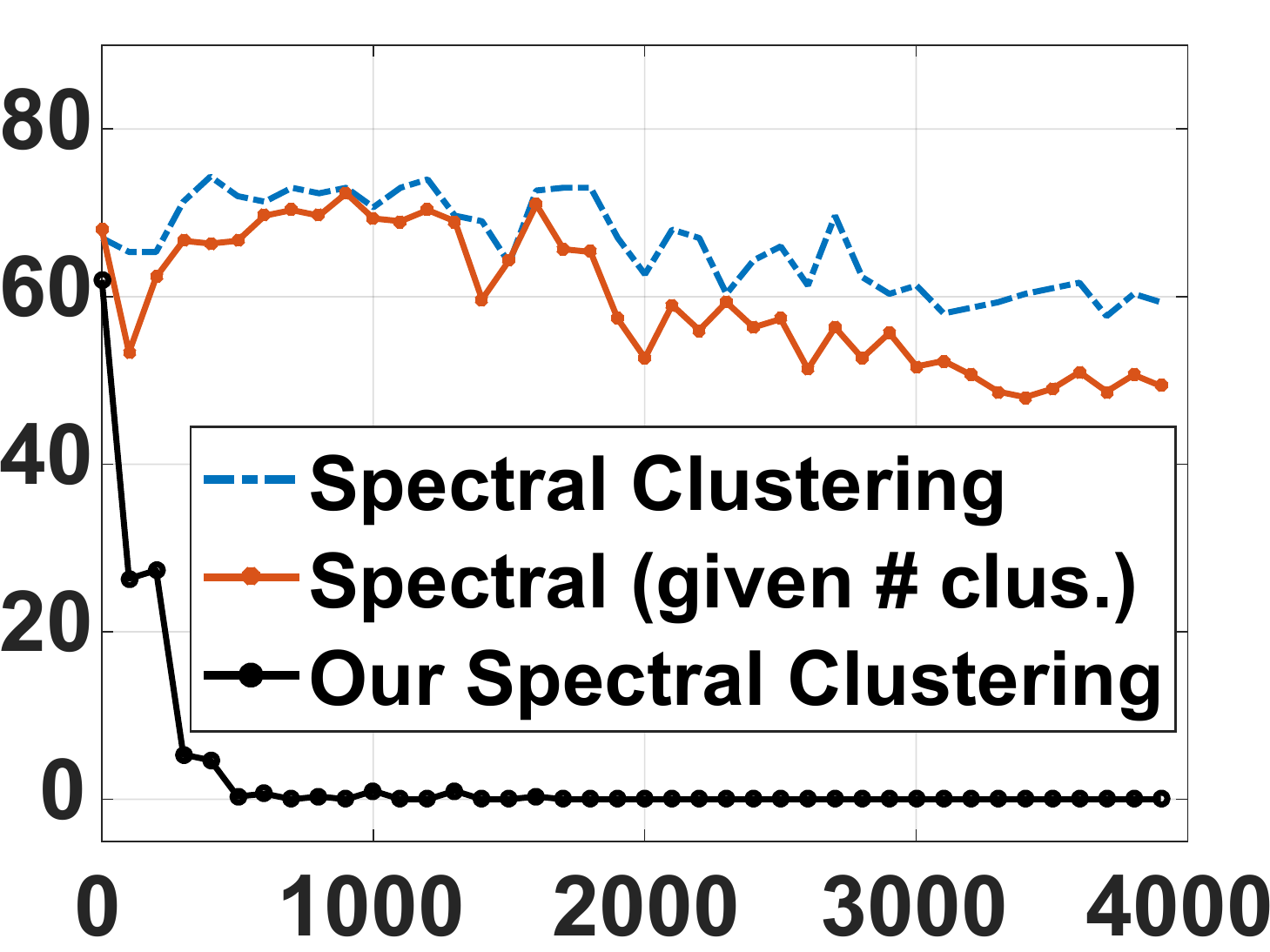}
  &\includegraphics[width=0.34\linewidth]{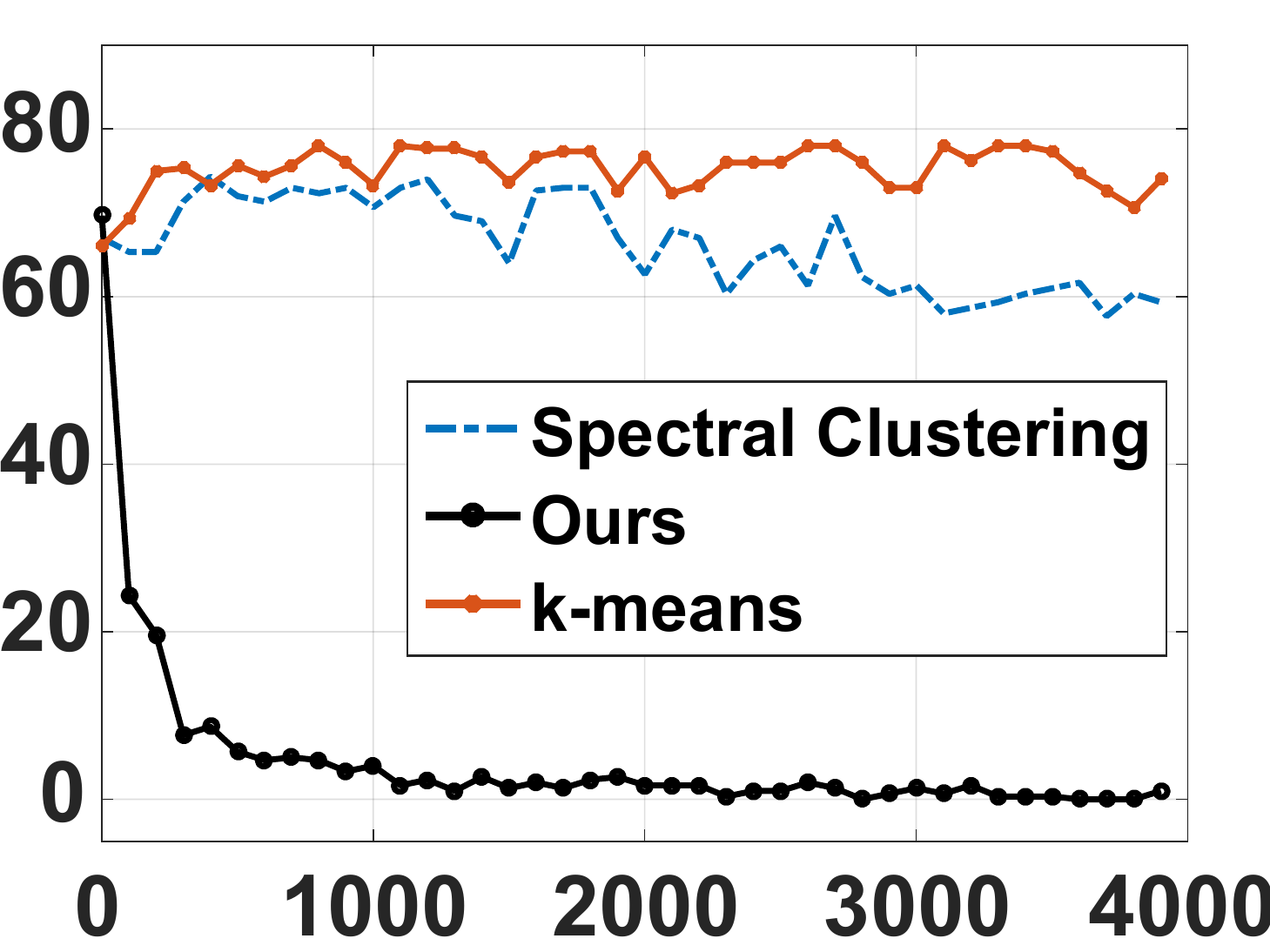}\\
  \multicolumn{3}{c}{Difficult}\\
  \includegraphics[width=0.34\linewidth]{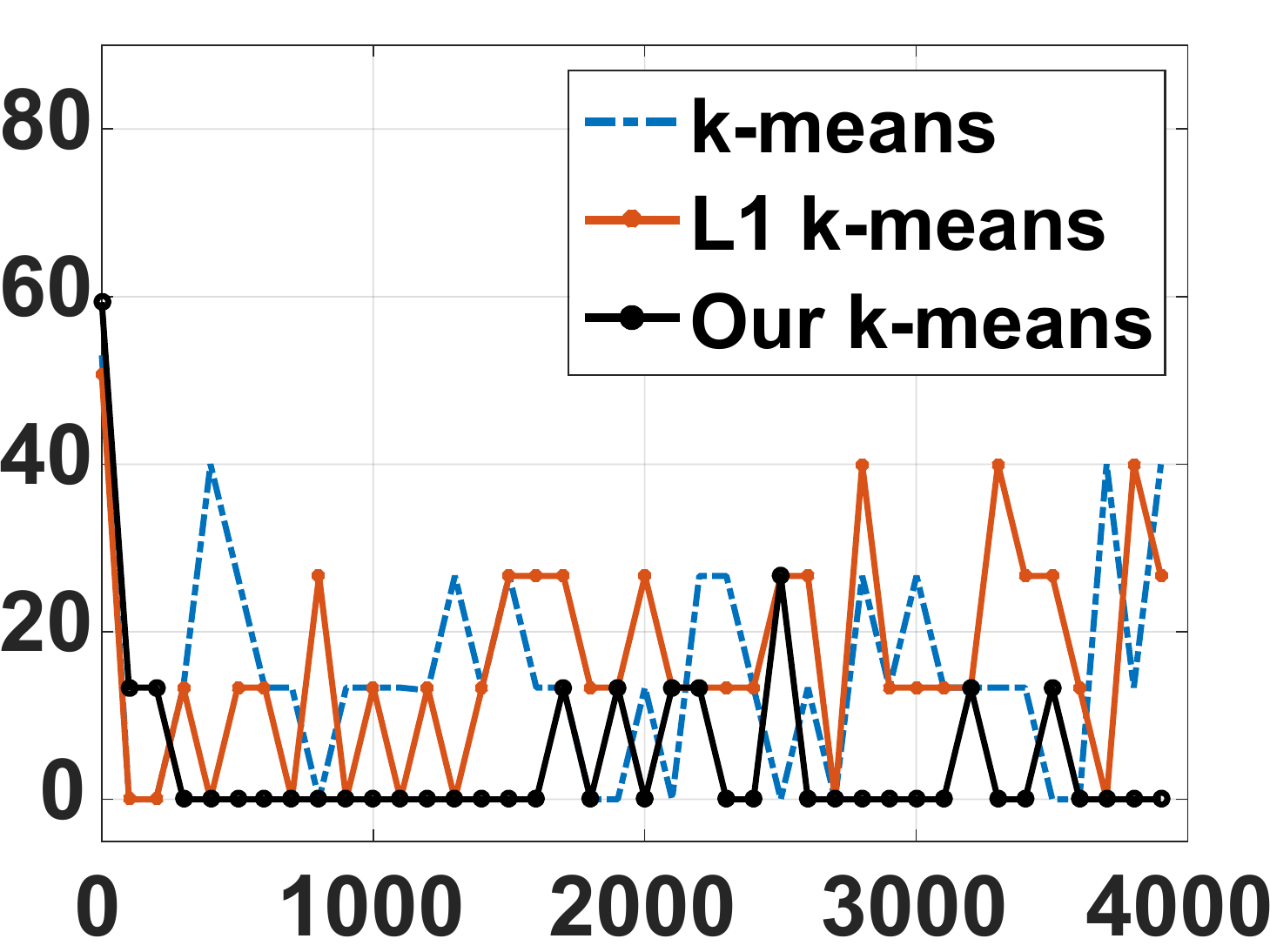}
  &\includegraphics[width=0.34\linewidth]{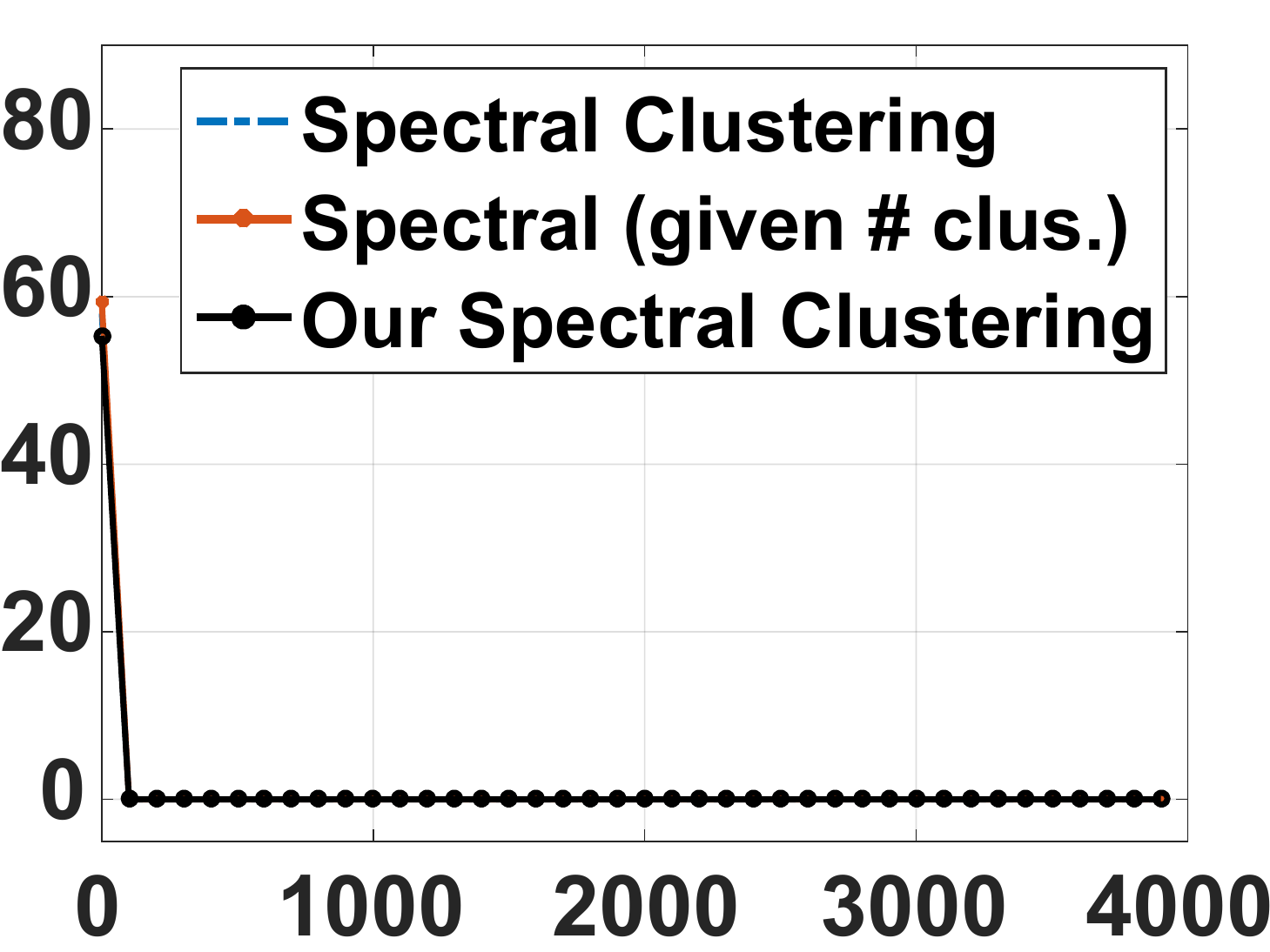}
  &\includegraphics[width=0.34\linewidth]{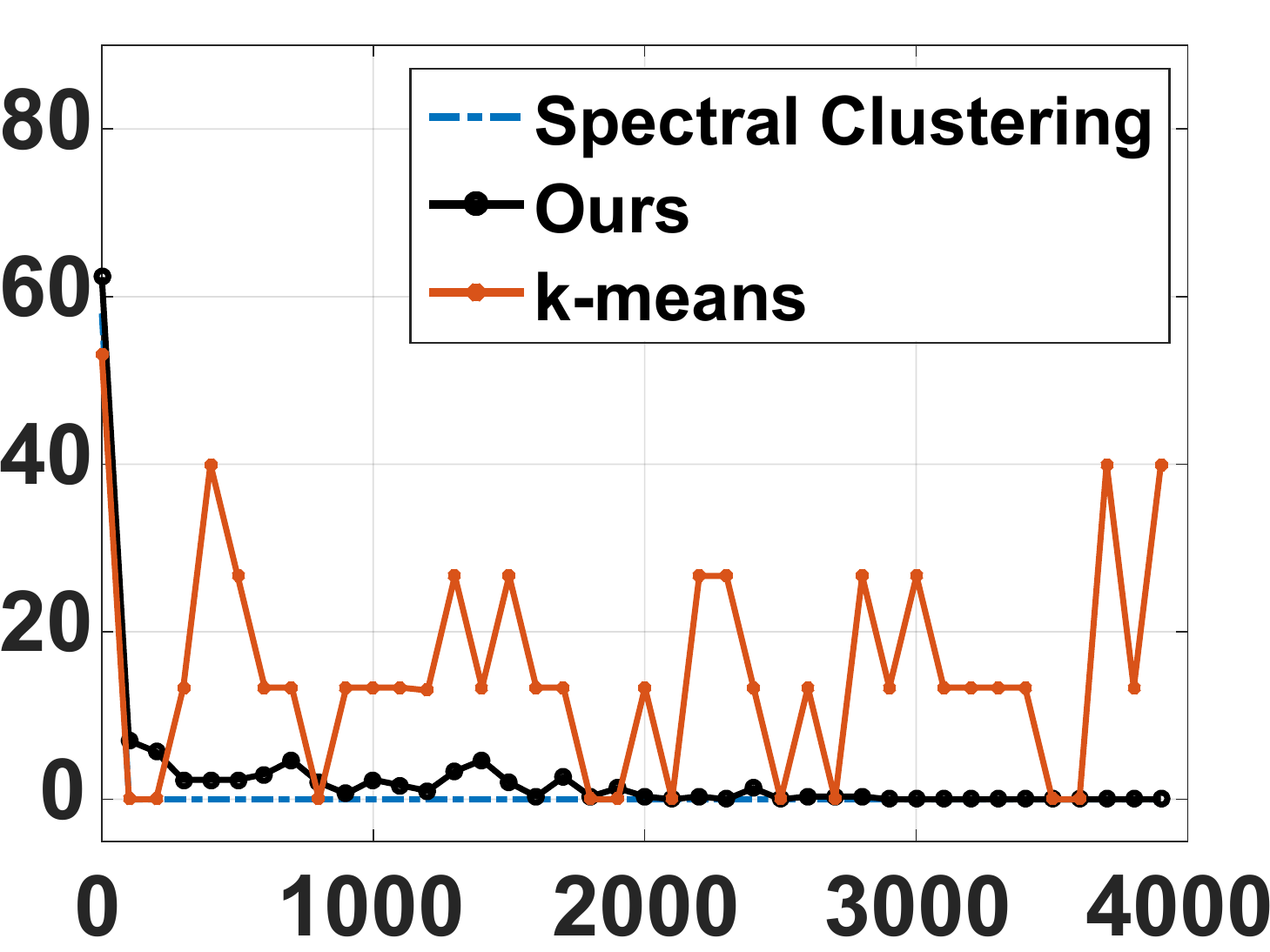}\\
  \multicolumn{3}{c}{Easy}\\
  \end{tabular}
\caption{
Simulation with increasing number of dimensions.
Left: k-means with $\ell_2$ norm, k-means with $\ell_1$ norm and our  second-order distance. Center: Spectral clustering with automatic
detection of cluster numbers~\cite{zelnik2005self}, given number of clusters~\cite{zelnik2005self} and   second-order
distance (\eref{eq:clus_d}). Right: System to system comparison of distribution-clustering,  k-means and spectral  clustering.
Only high dimensional, second-order distance algorithms 
are effective on ``Difficult'' data. 
 \label{fig:simu_dimension}}
\end{figure}

%
\begin{figure}[thp]
\centering
\setlength\tabcolsep{1.5pt} 
\begin{tabular}{ccc}
\includegraphics[width=0.32\linewidth]{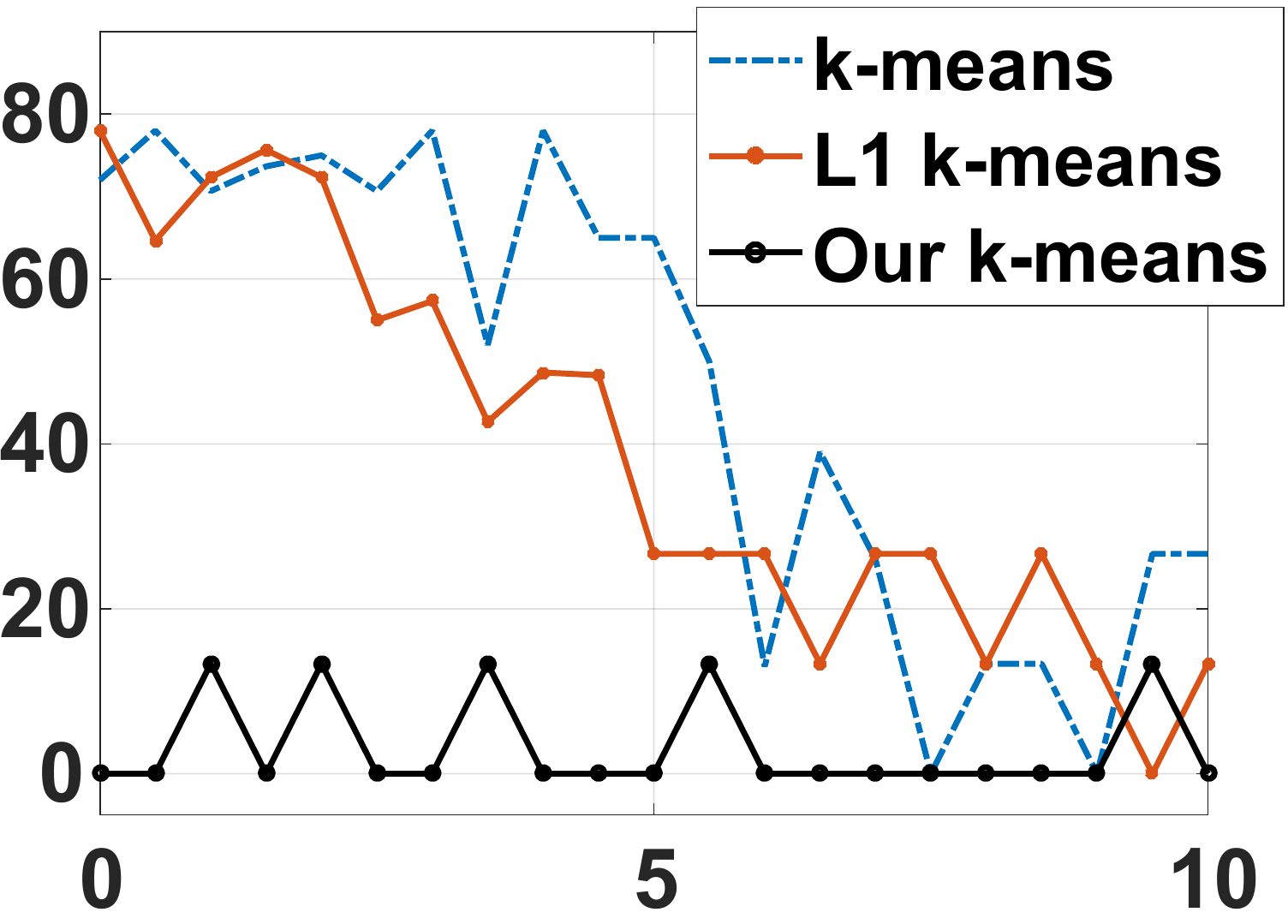}
&\includegraphics[width=0.32\linewidth]{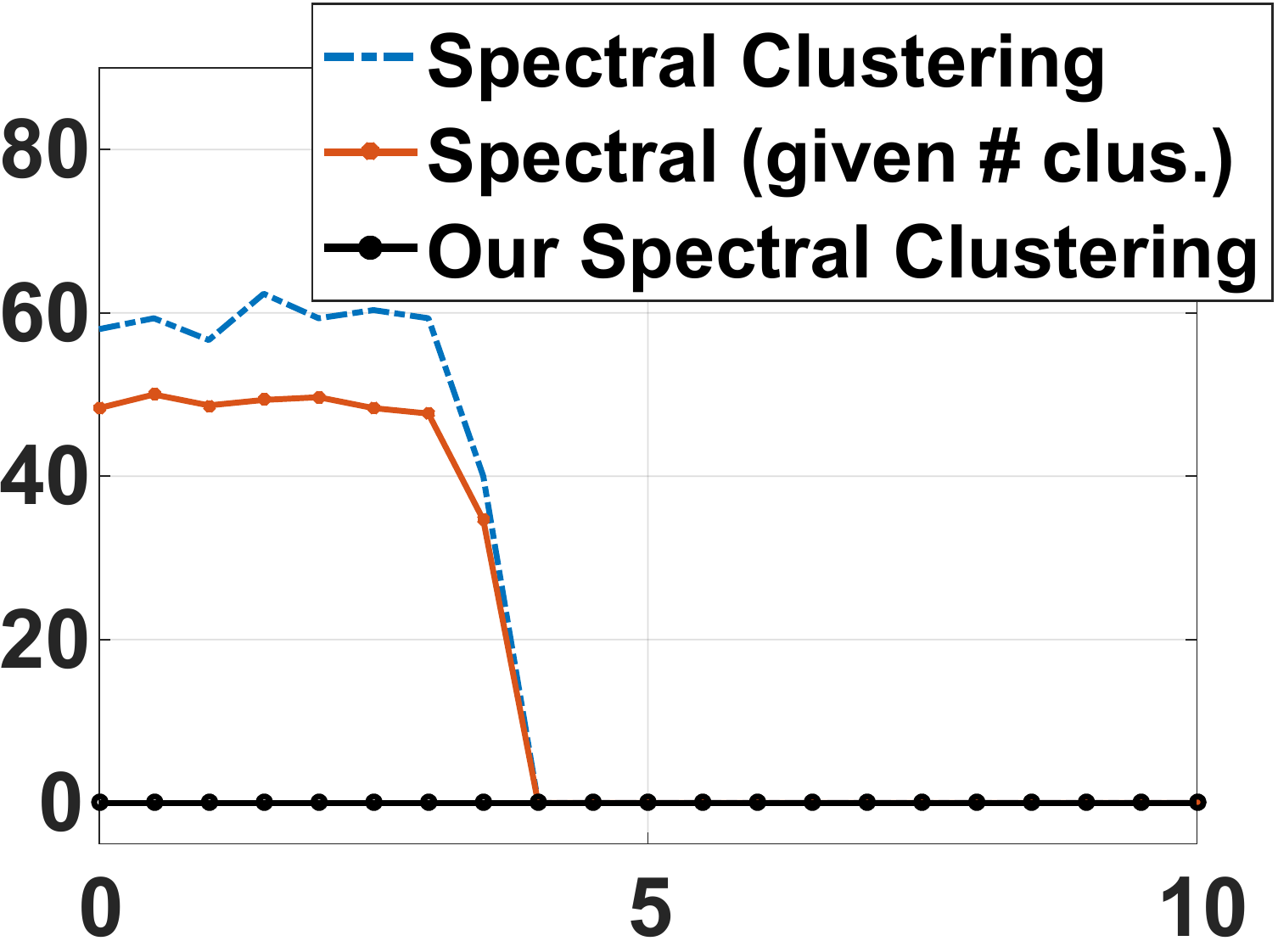}
&\includegraphics[width=0.33\linewidth]{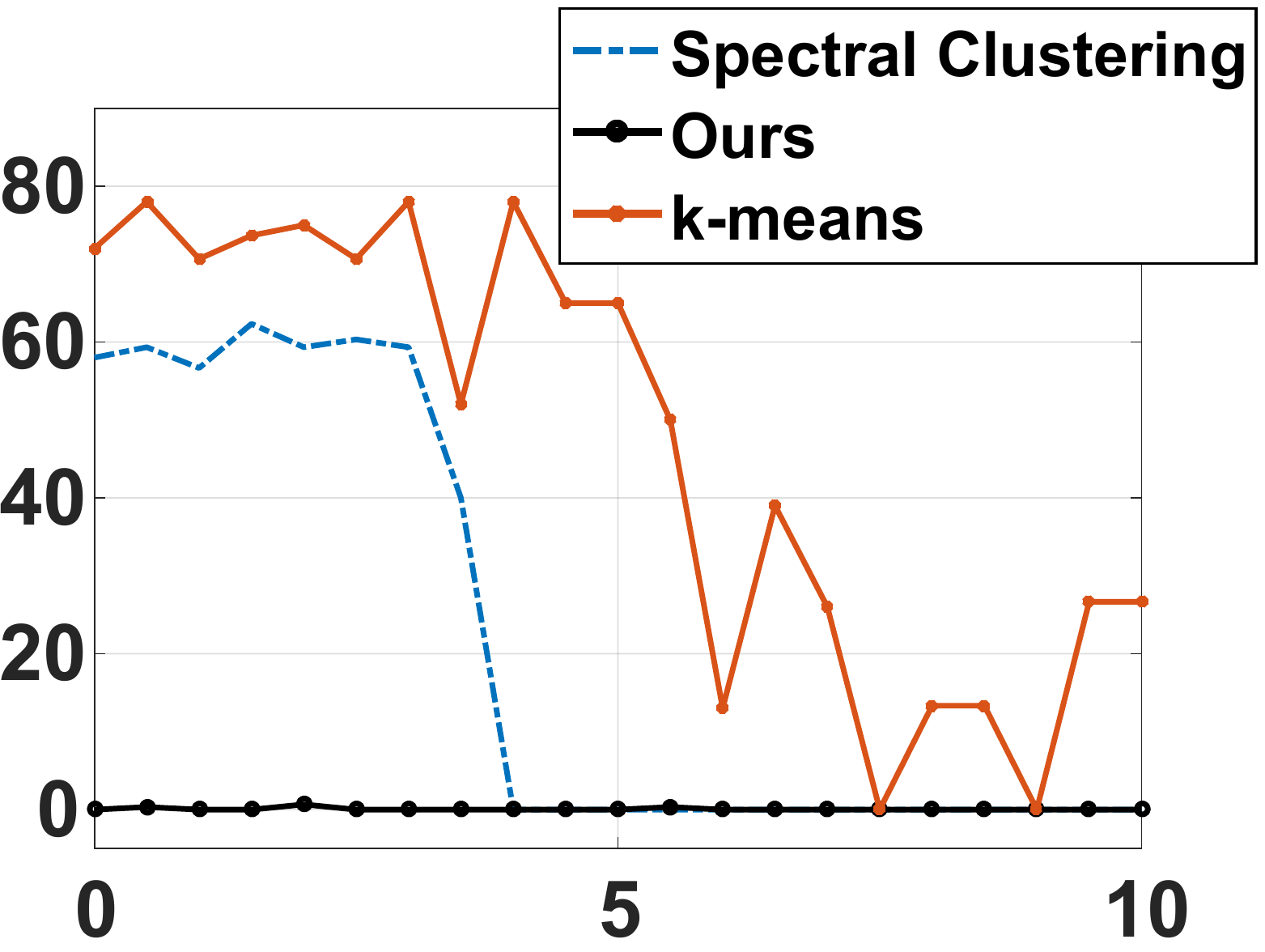}\\
\multicolumn{3}{c}{$\%$ Mis-classified vs. Separation of distribution centers}\\
\end{tabular}
\caption{Simulation with increasing separation of distribution centers.
Algorithms are the same as in \fref{fig:simu_dimension}. 
Only  second-order distance algorithms 
are performance invariant with   separation of distribution centers.
\label{fig:simu}}
\end{figure}

\paragraph{Real Images}
 with NetVLAD~\cite{arandjelovic2016netvlad} as image descriptors are used to evaluate clustering  on 5 data-sets: Handwritten numbers  in  Mnist~\cite{lecun1998gradient};
A mixture of images from  Google searches for ``Osaka castle''' and ``Christ the redeemer statue'';
2 sets of  10  object types from CalTech 101~\cite{griffin2007caltech}; And a mixture of ImageNet~\cite{deng2009imagenet} images from the Lion, Cat, Tiger classes.
Distribution-clustering is  evaluated against  five  baseline techniques: K-means ~\cite{Lloyd82leastsquares,arthur2007k}, spectral-clustering~\cite{zelnik2005self},
 projective-clustering~\cite{aggarwal1999fast}, GMM~\cite{bishop2007pattern} and quick-shift~\cite{vedaldi2008quick}. For k-means and GMM,  the number of clusters  is  derived from  distribution-clustering. This is typically  $20-200$.  Spectral and projective-clustering are prohibitively slow with many  clusters. Thus,
  their cluster numbers are fixed at $20$.

Cluster statistics are reported in \tref{tab:scores}.
On standard silhouette and purity scores,  distribution-clustering's  performance is comparable
to benchmark   techniques.    The performance  is  decent  for a new  approach and validates Theorem \ref{theorem:final}'s   ``contrast-loss''   constraint
in the real-world. However,  an interesting trend  hides in the average statistics.

Breaking down the purity score to find the percentage of images deriving from pure clusters, \ie, clusters with no wrong elements, we find that distribution-clustering assigns a remarkable fraction of images  to pure clusters. On average, it is $1.5$ times better than the  next best algorithm and in some cases can nearly double the performance.
This is important to data-abstraction where pure clusters allow a single average-feature to represent a set of features.
In addition, distribution-clustering ensures   pure clusters are readily identifiable.
\Fref{fig:clus_real} plots percentage error  as  clusters are processed in order of variance. Distribution-clustering keeps  ``outliers''  packed into  high-variance clusters, leaving  low-variance clusters especially pure. This  enables concepts like  image ``over-segmentation'' to be transferred to unorganized image  sets.

 K-means and GMM are the closest alternative to distribution-clustering. However, their clusters are   less  pure
and they are dependent on 
distribution-clustering to initialize the number of clusters. This makes distribution-clustering  one of the few (only?) methods  effective  on highly chaotic image data 
like Flickr11k~\cite{yang2008contextseer}  demonstrated in  \fref{fig:qual}.

\begin{figure}[t]
\centering
\setlength\tabcolsep{3pt} 
\begin{tabular}{cc}
\includegraphics[width=0.45\linewidth]{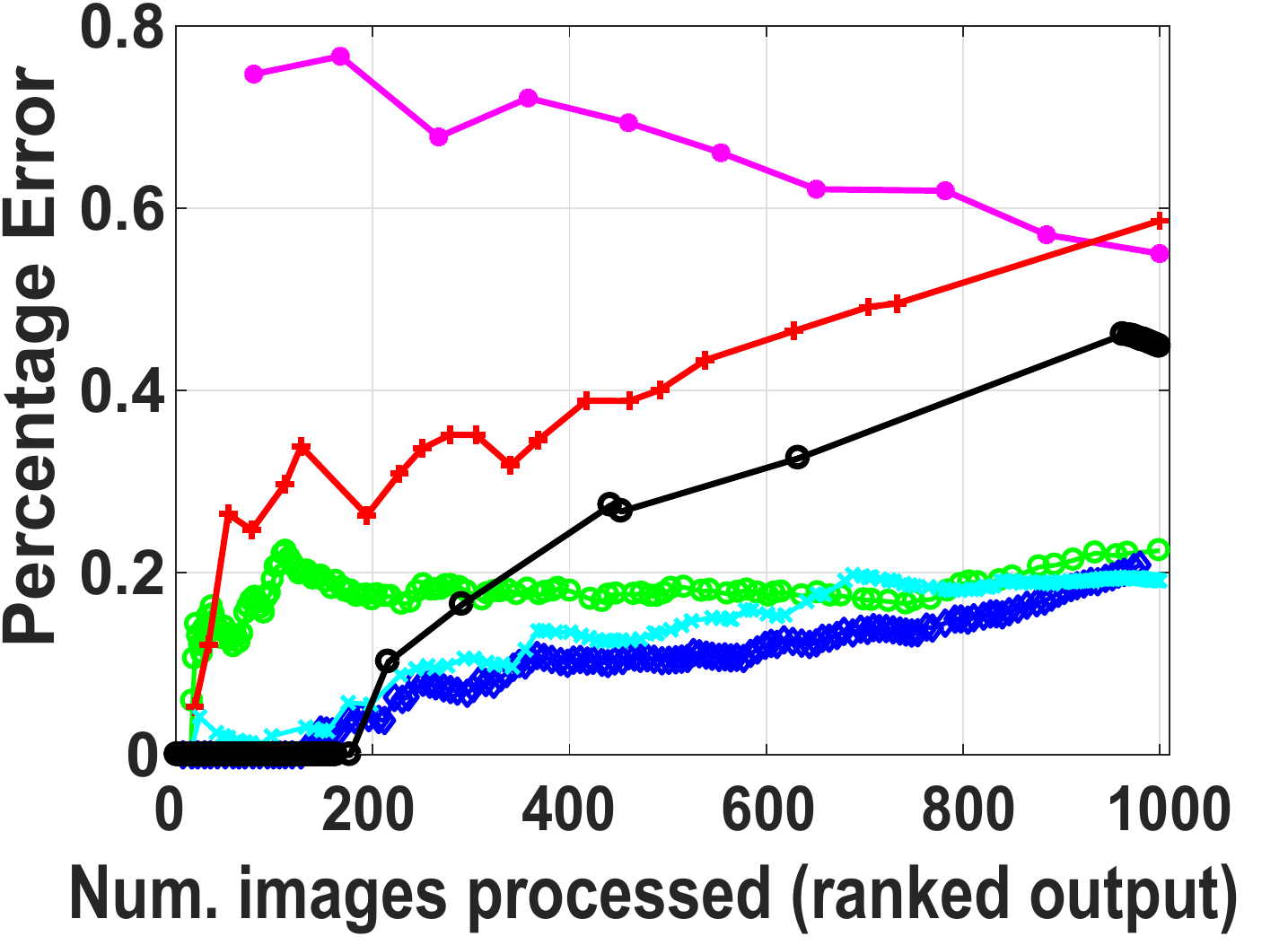}
&\includegraphics[width=0.45\linewidth]{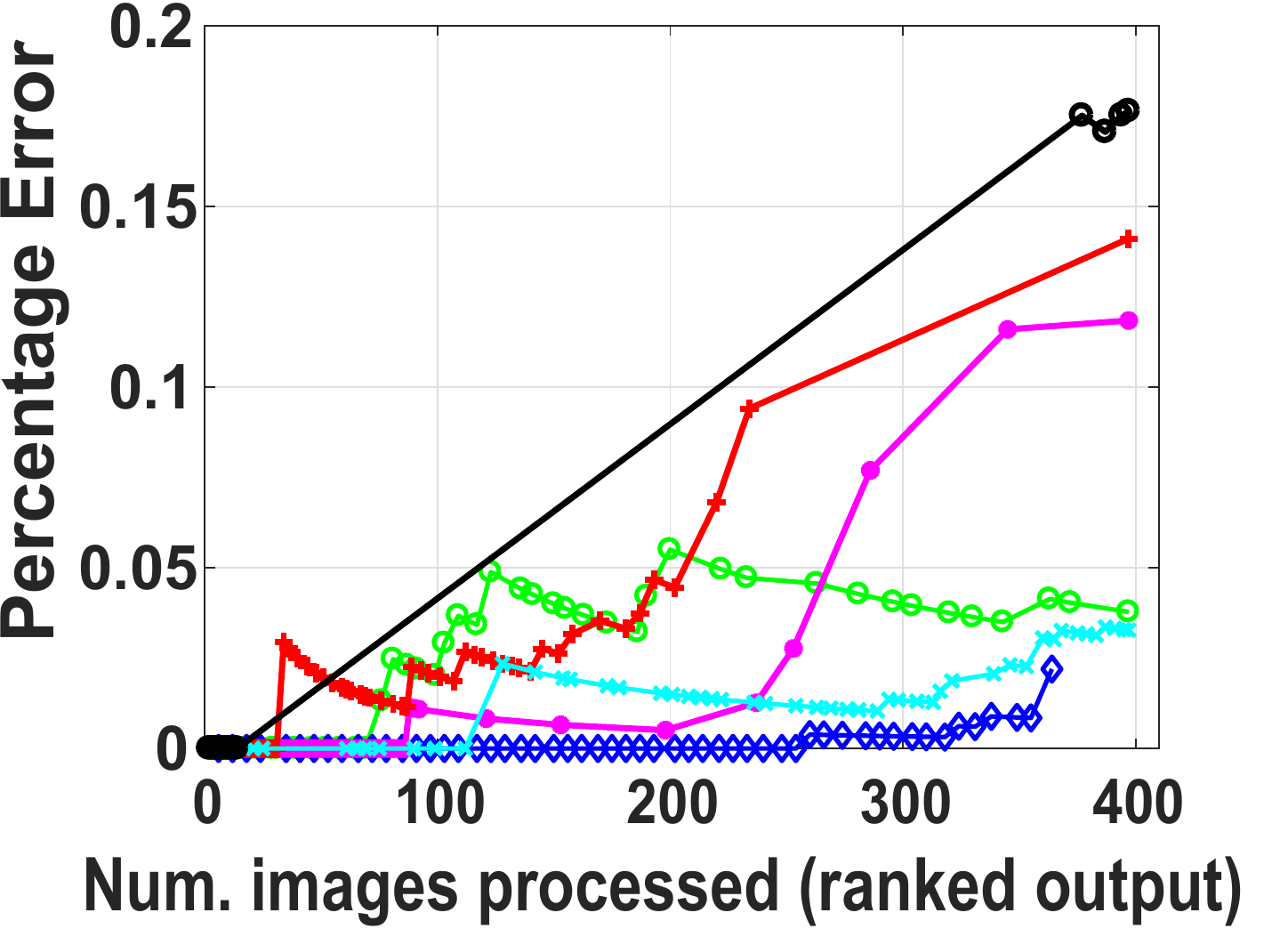}\\
Minst & Internet Images \\    \\
\includegraphics[width=0.45\linewidth]{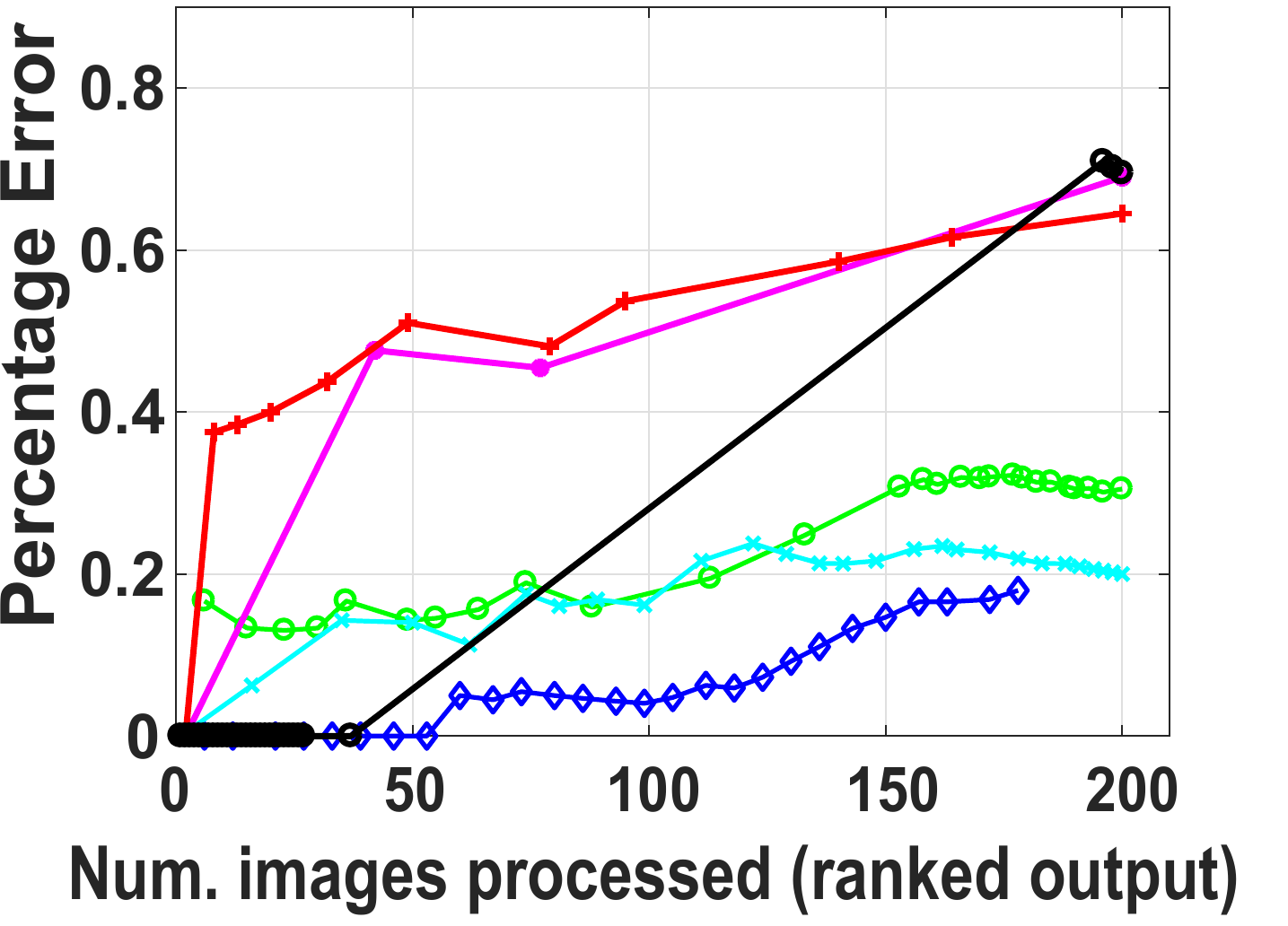}
&\includegraphics[width=0.45\linewidth]{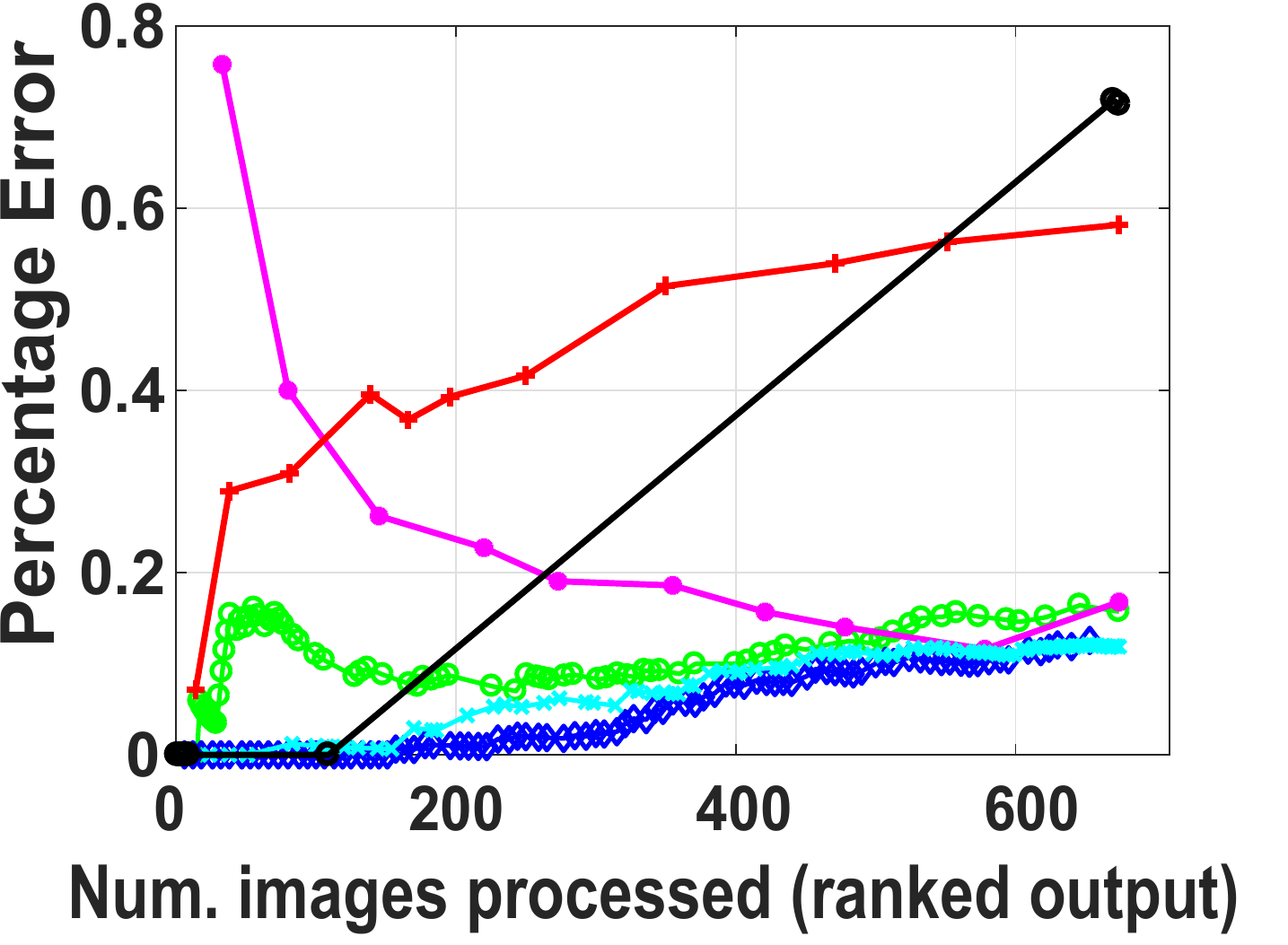}\\
CalTech101 (Set1) & CalTech101 (Set2) \\    \\
\includegraphics[width=0.45\linewidth]{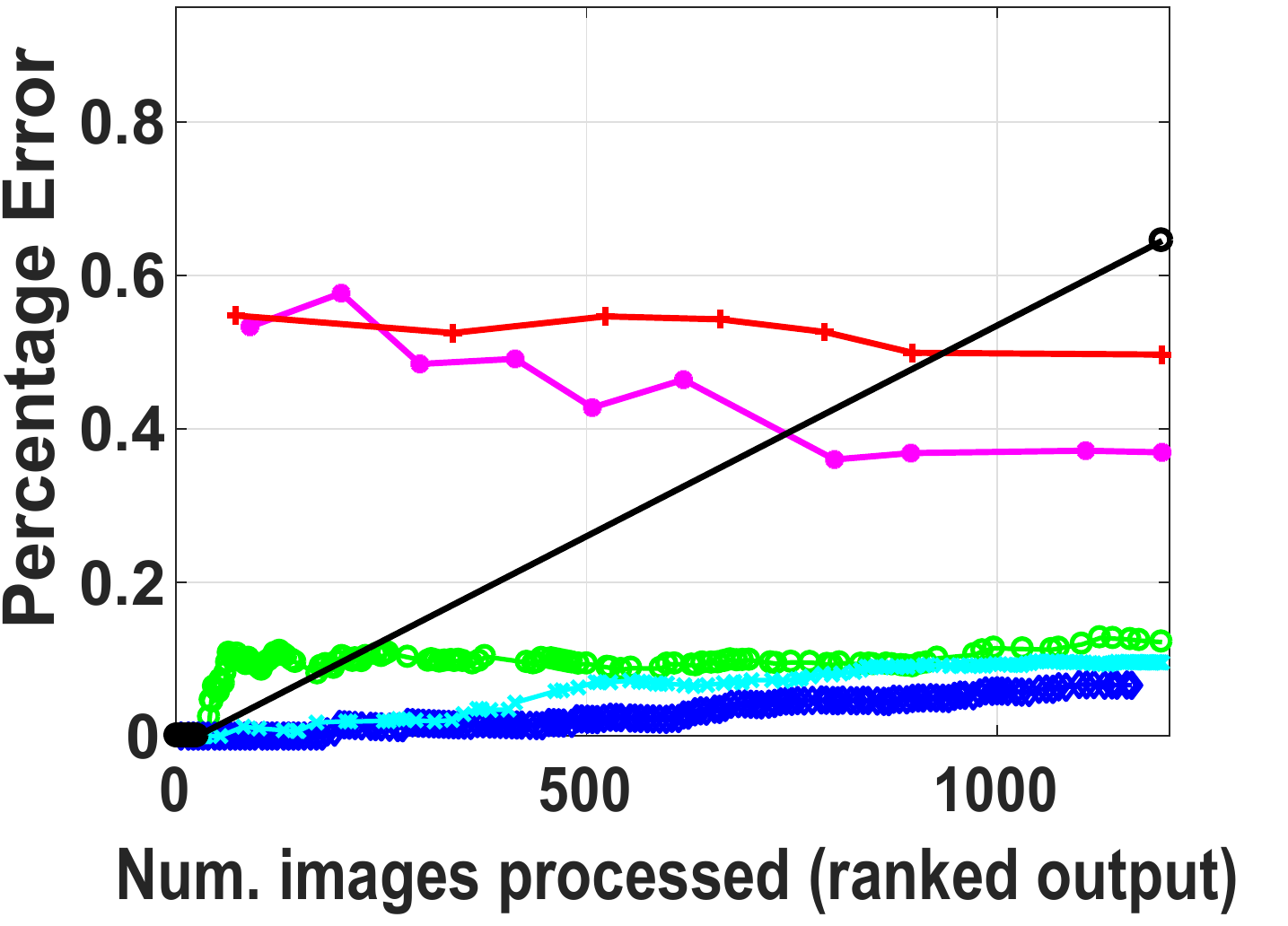}
&\includegraphics[width=0.2\linewidth]{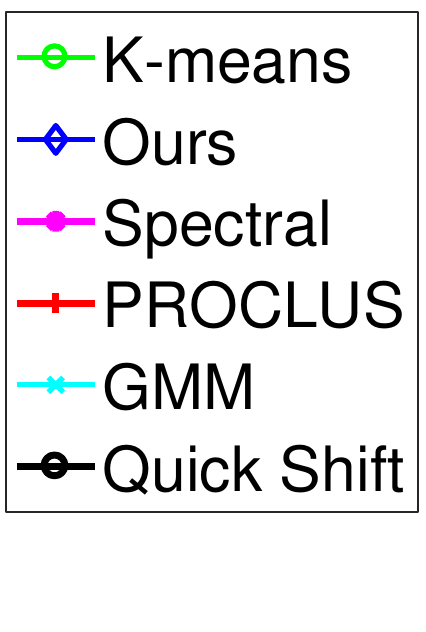}\\
Cats &  \\
\end{tabular}
\caption{Clustering real images. Clusters are ranked by variance. Unlike prior methods where ``outliers'' are randomly scattered in  clusters, distribution-clustering concentrates ``outliers'' in high variance clusters, making identification of pure clusters easy.  }
\label{fig:clus_real}
\end{figure}

\begin{table}[t]
\begin{center}
\setlength\tabcolsep{5pt}
\scriptsize
\begin{tabular}{|c|c|c|c|c|c|c|}
\hline
     &\multicolumn{6}{c|}{{\textit{Silhouette Score}}} \\
\hline
Dataset & K-Means & Spectral & PROCLUS & GMM& QS & Ours \\
        &~\cite{Lloyd82leastsquares,arthur2007k} &~\cite{zelnik2005self} & ~\cite{aggarwal1999fast} & \cite{bishop2007pattern} & \cite{vedaldi2008quick} &\\
\hline
MNIST       &0.0082  & 0.0267    & -0.0349       & 0.0076& \textbf{0.0730}&0.038 \\
Internet &0.084  & 0.041     & -0.038   &\textbf{ 0.0963}& 0.0488 & 0.003 \\
CalTech1     &0.027  & 0.02  & -0.0045           & 0.0373& \textbf{0.0999}& 0.074\\
CalTech2     &0.028  & \textbf{0.084}  & -0.0186 & 0.0248 & 0.0350& 0.042 \\
Cats     &0.007  & 0 & -0.002                    & 0.0236& \textbf{0.0752}& 0.0239 \\
Average & 0.0308& 0.034&-0.020                   &     0.0379
& \textbf{ 0.0532}& 0.036\\
\hline
     &\multicolumn{6}{c|}{ \textit{Purity Score} } \\
\hline
Dataset & K-Means & Spectral & PROCLUS & GMM & QS & Ours \\
\hline
MNIST       &0.77  & 0.45    & 0.41             & \textbf{0.81}& 0.55 & 0.79 \\
Internet    &0.96  & 0.90     & 0.86            &\textbf{0.97} & 0.82& \textbf{0.97} \\
CalTech1     &0.71  & 0.44     & 0.36           & 0.80&0.31 & \textbf{0.82} \\
CalTech2     &0.84  & 0.83     & 0.41           &\textbf{0.88} & 0.29& 0.87 \\
Cats     &0.88 & 0.63     &  0.50               & 0.90&0.35 & \textbf{0.93} \\

Average & 0.83& 0.65& 0.51                      &0.87 & 0.46&  \textbf{0.88}\\
\hline
     &\multicolumn{6}{c|}{\textit{$\%$ of images in pure clusters (excluding singletons)} } \\
\hline
Dataset & K-Means & Spectral & PROCLUS & GMM & QS & Ours \\
\hline
MNIST       &0.28  & 0      & 0                 &0.32 &0.059 & \textbf{0.49} \\
Internet    &0.83  & 0.82     & 0.40            &0.83 &0.031 & \textbf{0.92} \\
CalTech1     &0.08  & 0.02     & 0.01           &0.30 &0.081 & \textbf{0.52} \\
CalTech2     &0.47  & 0.24     & 0              &0.52 &0.16 & \textbf{0.65} \\
Cats     &0.34 & 0     & 0                      &0.40 &0.017 & \textbf{0.72} \\
Average & 0.40& 0.22&0.082                      &0.47 &0.070 &  \textbf{0.66}\\
\hline
     &\multicolumn{6}{c|}{\textit{$\%$ of  pure clusters (excluding singletons)} } \\
\hline
Dataset & K-Means & Spectral & PROCLUS & GMM & QS  &Ours \\
\hline
MNIST       &0.43  & 0      & 0        &0.44 &\textbf{0.62} & 0.49 \\
Internet    &0.80  & 0.90     & 0.8    &0.83 &0.40 & \textbf{0.93} \\
CalTech1     &0.30  & 0.20     & 0.09  &0.46 & \textbf{0.75} & 0.52 \\
CalTech2     &0.54  & 0.20     & 0     &0.57 &\textbf{0.67} & \textbf{0.67} \\
Cats     &0.57 & 0     & 0             &0.55 &0.50 & \textbf{0.74} \\
Average & 0.53& 0.32&0.178             &0.57 &0.59 &  \textbf{0.67}\\
\hline
\end{tabular}
\end{center}
\caption{Cluster Statistics. Distribution-clustering ensures a large percentage of images belong to pure clusters.  }\label{tab:scores}
\end{table}

\paragraph{Timing} excludes feature extraction cost which is common to all algorithms. Experiments are on  
an  i7  machine, with $4096$ dimension NetVlad~\cite{arandjelovic2016netvlad} features computed over the $400$ images of Internet data-set.   
Our single core,  Matlab implementation of  distribution-clustering takes $23$ seconds, of which $4.5$ seconds was spent computing the affinity matrix. Timing for other algorithms are as follows. 
 K-means~\cite{Lloyd82leastsquares,arthur2007k}: $0.73$ seconds, Quick Shift~\cite{vedaldi2008quick} (Python): $1.5$ seconds,  spectral clustering~\cite{zelnik2005self}  (20 clusters): $2$ seconds, GMM\footnote{GMM's timing is with  covariance estimation.
Fixed covariance matrix permits   convergence in   seconds but is  inappropriate on some data.} \cite{bishop2007pattern}: $4$ minutes  and  PROCLUS~\cite{aggarwal1999fast} (20 clusters on OpenSubspace V3.31):  9 minutes.

\paragraph{Qualitative} inspection of  distribution-clusters show they have a purity not captured by  quantitative evaluation.  \Fref{fig:clus} illustrates this  on  Colosseum images crawled from the web. Quantitatively,   both distribution-clustering and k-means are 
nearly equal, with few clusters  mixing  Colosseum and ``outlier'' images. However,  distribution-clusters  are qualitatively better, with images  in  a cluster sharing a clear, generative distribution.  

\paragraph{Other} things  readers may want to note. More qualitative evaluation of clustering  is available in the supplementary. Code is available at \textcolor{blue}{http://www.kind-of-works.com/}.

\begin{figure}
\center
\begin{tabular}{c|c}
\includegraphics[width=0.45\linewidth]{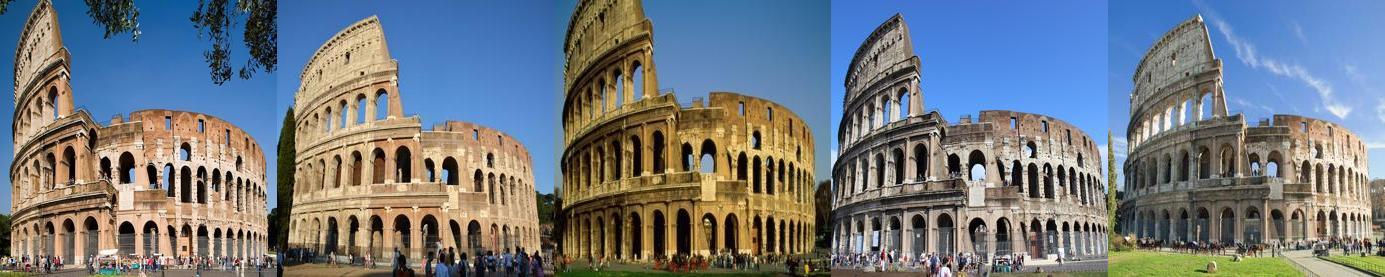}&
\includegraphics[width=0.45\linewidth]{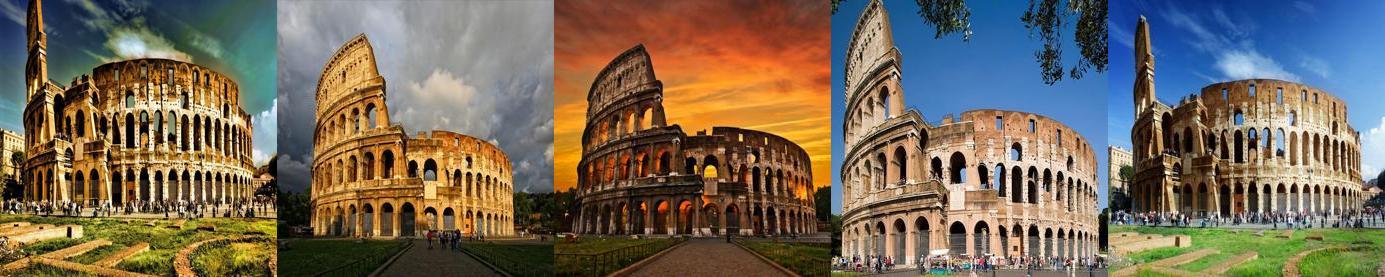}\\
\includegraphics[width=0.45\linewidth]{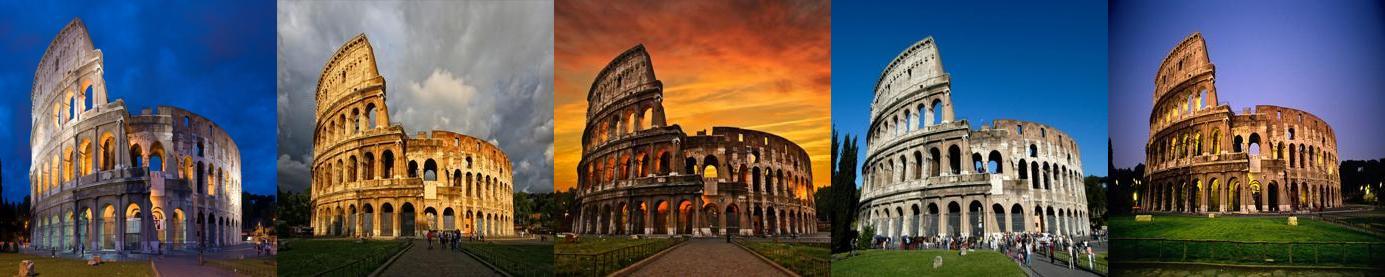}&
\includegraphics[width=0.45\linewidth]{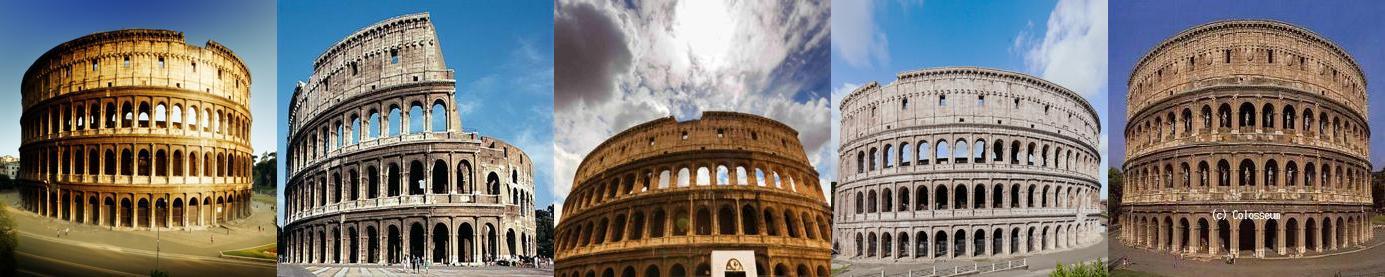}\\
\includegraphics[width=0.45\linewidth]{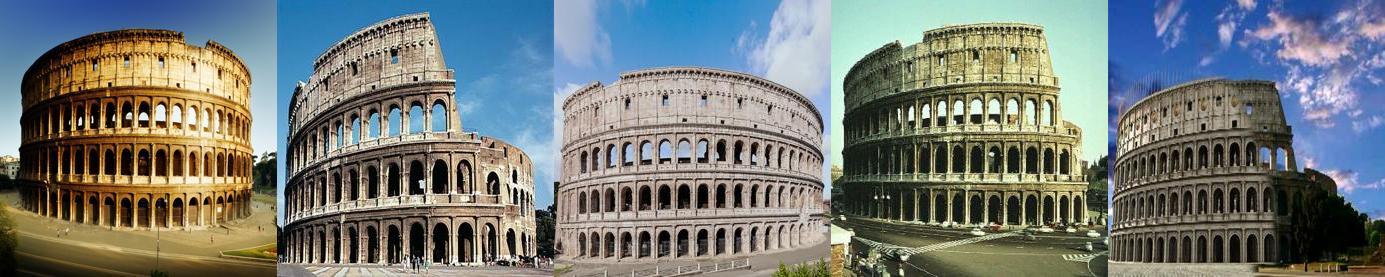}&
\includegraphics[width=0.45\linewidth]{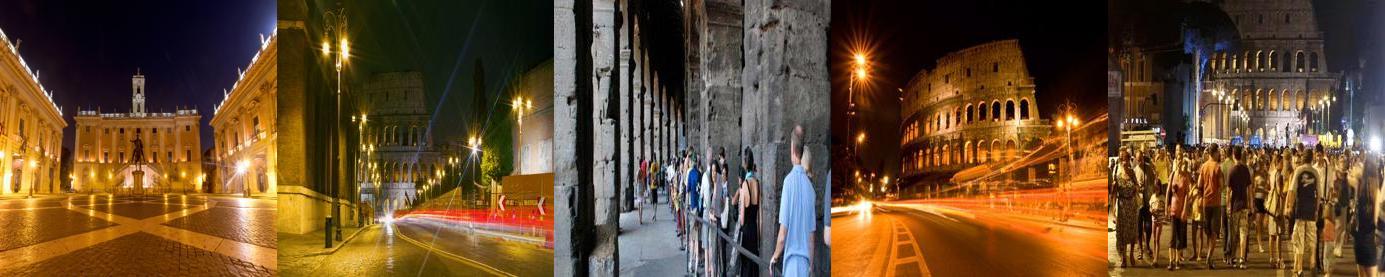}\\
\includegraphics[width=0.45\linewidth]{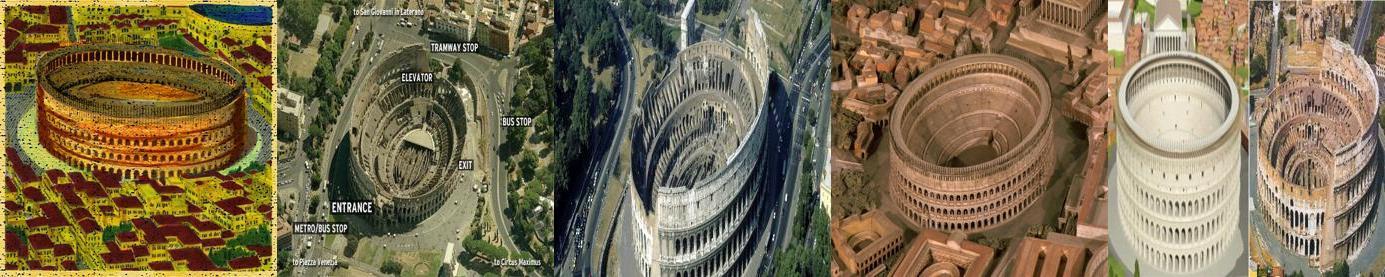}&
\includegraphics[width=0.45\linewidth]{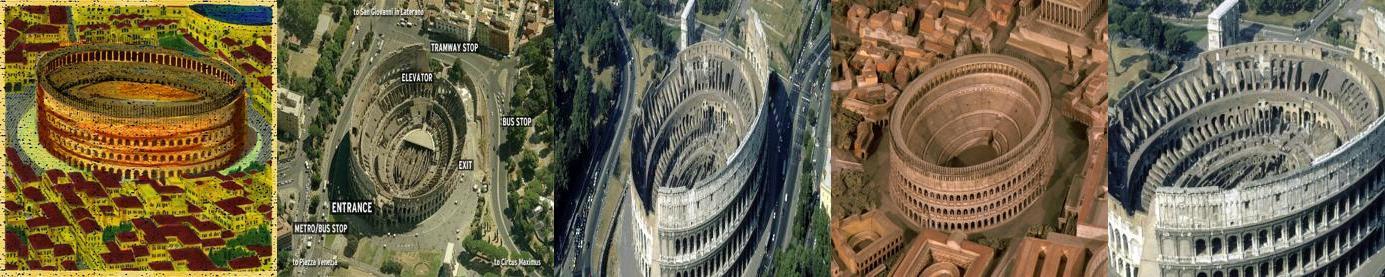}\\
\includegraphics[width=0.45\linewidth]{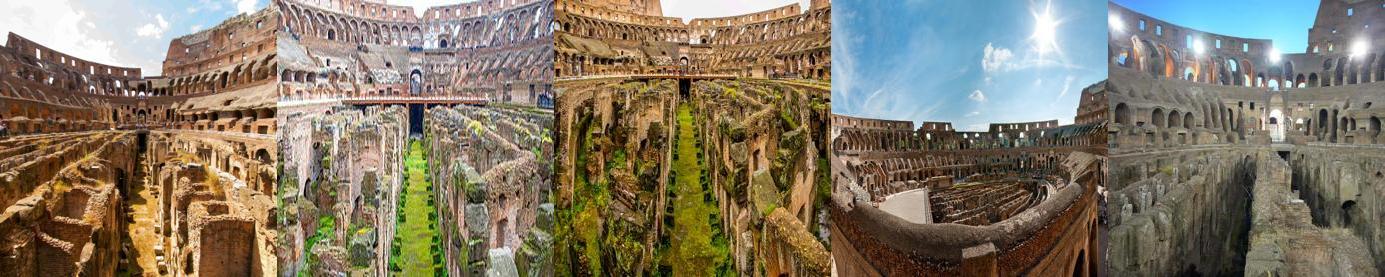}&
\includegraphics[width=0.45\linewidth]{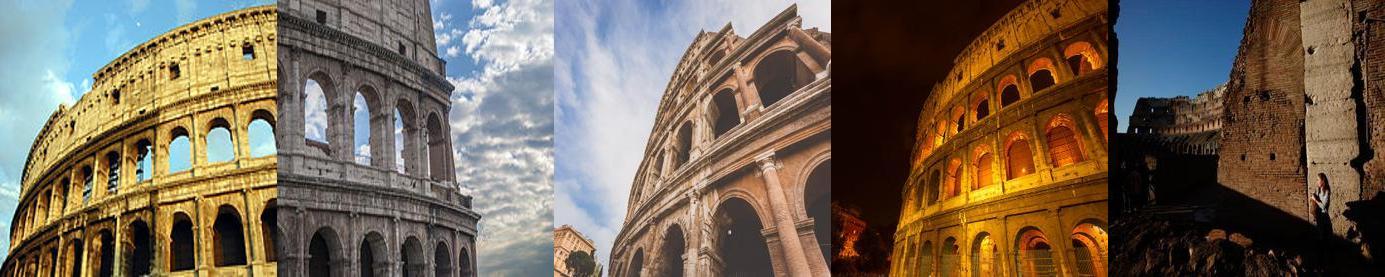}\\
\includegraphics[width=0.45\linewidth]{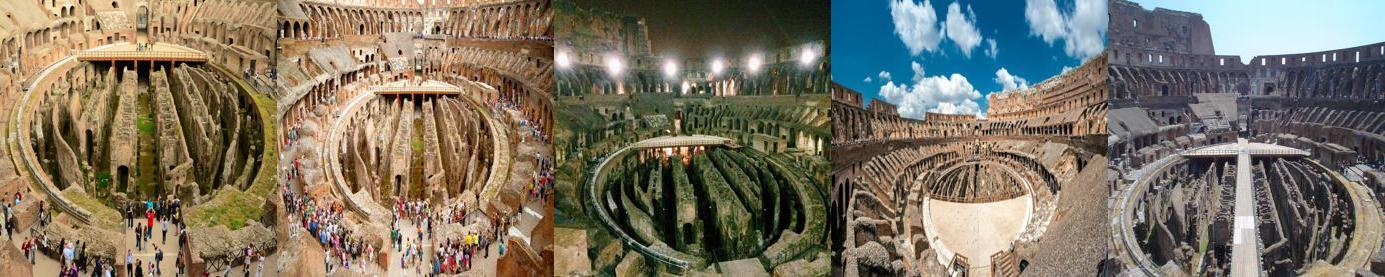}&
\includegraphics[width=0.45\linewidth]{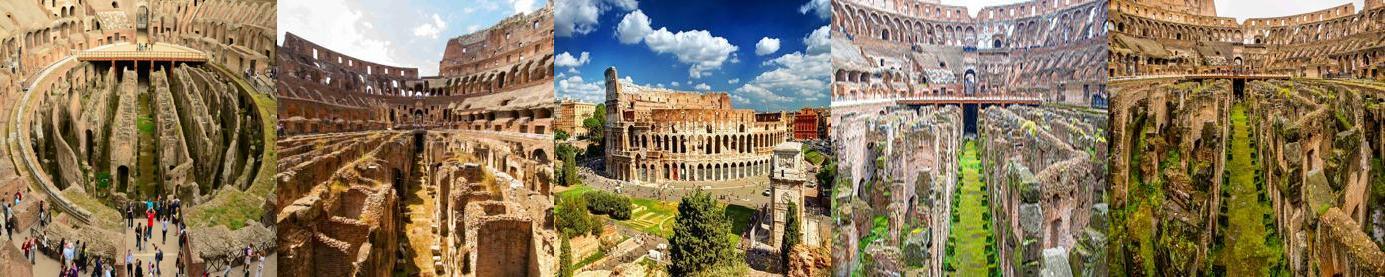}\\
\includegraphics[width=0.45\linewidth]{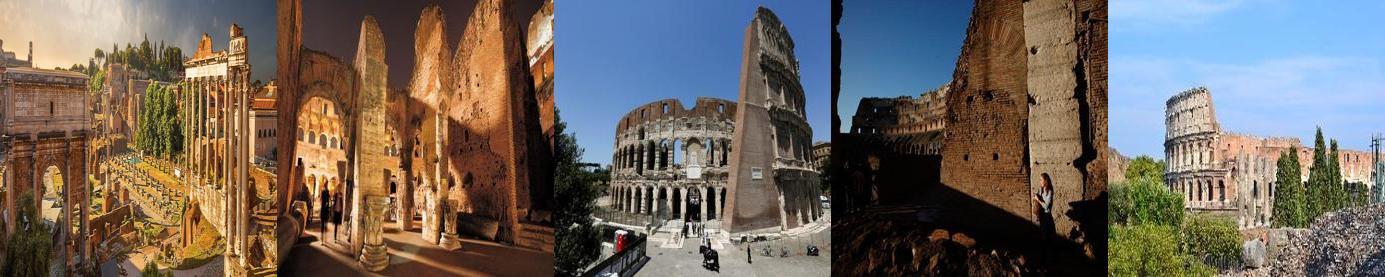}&
\includegraphics[width=0.45\linewidth]{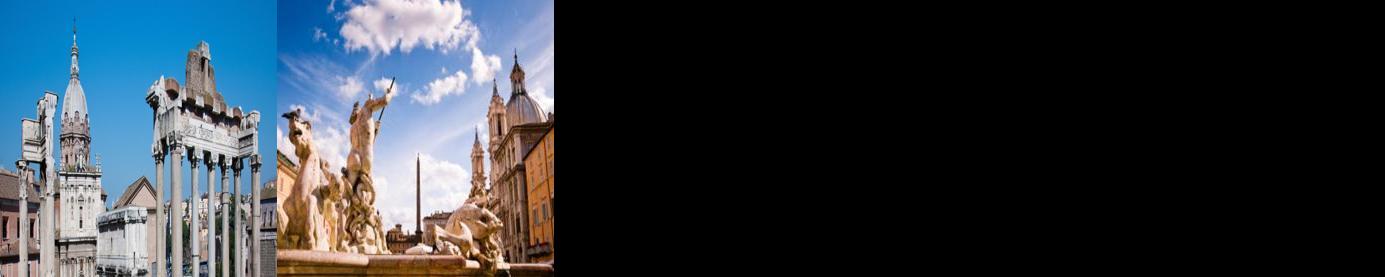}\\
\includegraphics[width=0.45\linewidth]{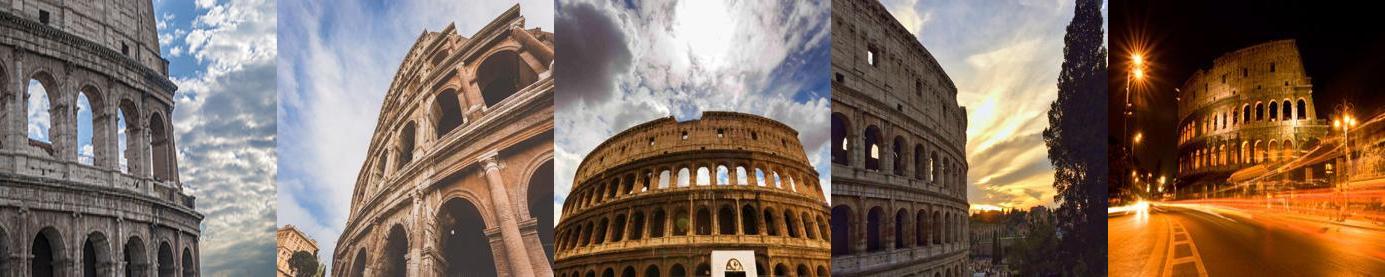}&
\includegraphics[width=0.45\linewidth]{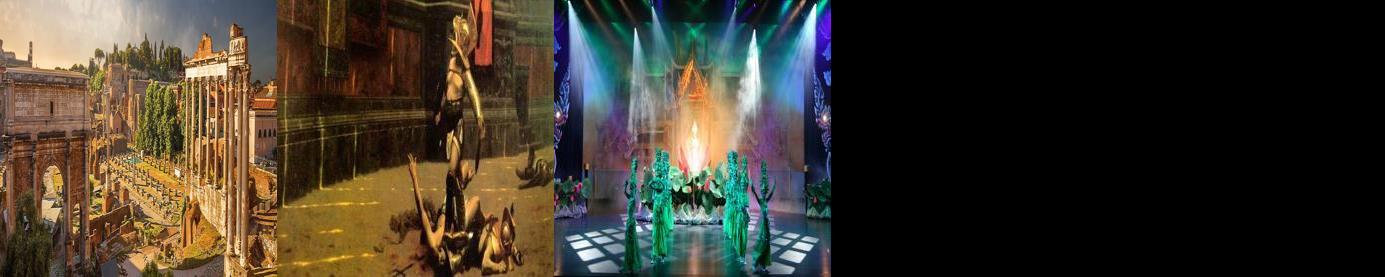}\\
\includegraphics[width=0.45\linewidth]{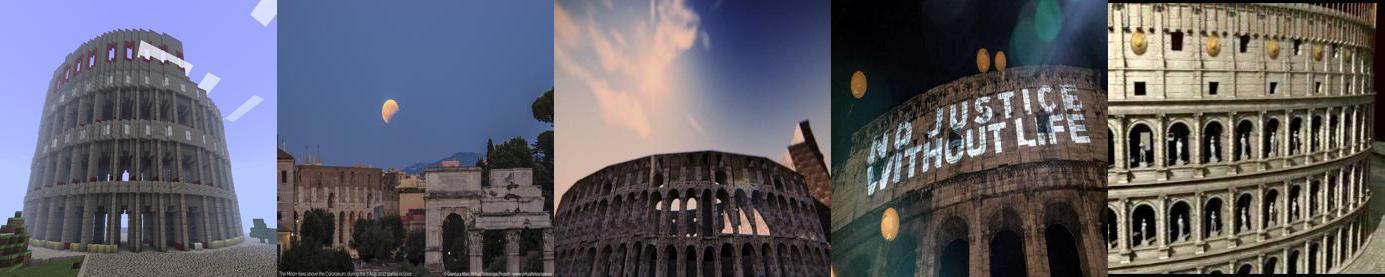}&
\includegraphics[width=0.45\linewidth]{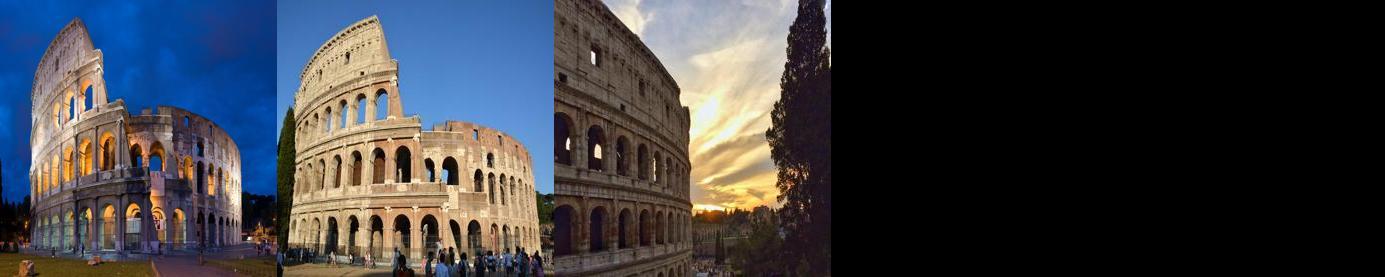}\\
 Distribution-clustering & \emph{K-means} clustering
\end{tabular}
\caption{Distribution-clustering provides  fine  intra-cluster consistency, with  images of a cluster sharing a clear, generative distribution. This  qualitative improvement is not captured in evaluation statistics.
\label{fig:clus}
 }
\end{figure}

\section{Conclusion}

We have shown that chaotically overlapping distributions become
intrinsically separable in high dimensional space and proposed a distribution-clustering algorithm
to achieve it. 
By turning a former curse of dimensionality into a blessing, distribution-clustering  is a powerful technique for  discovering  patterns and trends
in  raw data. This   can  impact  a wide range of disciplines ranging from  semi-supervised learning to  bio-informatics.  

\section*{Acknowledgment}
This paper is dedicated to Professor Douglas L. Jones, Professor Minh N. Do and Hong-Wei Ng.

Y. Matsushita is supported by JST CREST Grant Number JP17942373, Japan. 


\end{document}